%% file: main_sgd_v2.tex
\documentclass{amsart}

\input{dependencies/dependencies}

\usepackage{soul}
\usepackage{subcaption}

\title{Complex Stochastic Gradient Descent  and Directional Bias in Reproducing Kernel Hilbert Spaces}
\date{\today}

\author[N. Alpay]{Natanael Alpay}
\address{(NA)
Department of Mathematics\\
University of California, Irvine,
Irvine, CA 92697 \\
USA}
\email{nalpay@uci.edu}

\author[E. Battaglia]{Emeric Battaglia}
\address{(EB)
Department of Mathematics\\
University of California, Irvine,
Irvine, CA 92697 \\
USA}
\email{ebattagl@uci.edu}

\subjclass[2020]{Primary 65K10; Secondary 62L20, 46E22, 30H10, 65F10, 90C25}

\begin{document}

\begin{abstract}
    Stochastic Gradient Descent (SGD) is a known stochastic iterative method popular for large-scale convex optimization problems due to its simple implementation and scalability. Some objectives, such as those found in complex-valued neural networks, benefit from updates like in SGD and Gradient Descent (GD) with a newly defined ``gradient'' that allows for complex parameters. This complex variant of the SGD/GD methods has already been proposed, but convergence guarantees without analyticity constraints have not yet been provided. We propose a variant of SGD (complex SGD) that allows for complex parameters, and we provide convergence guarantees under assumptions that parallel those from the real setting. Notably, these results extend to GD as well, and with the same set of assumptions, we confirm that some directional bias results extend from the real to the complex setting for kernel regression problems. We provide empirical results demonstrating the efficacy of the complex SGD in kernel regression problems utilizing complex reproducing kernel Hilbert spaces. In particular, we demonstrate we may recover superoscillation functions and Blaschke products from the Fock Space and Hardy Space, respectively, as the optimal functions for a particular choice of a loss function.
\end{abstract} 

\maketitle

\section{Introduction}
    Stochastic iterative methods, such as Stochastic Gradient Descent (SGD), are pivotal in minimizing objective or loss functions over large datasets. Much of the literature surrounding such minimization techniques assumes the system's parameters are real-valued. However, some models, such as complex-valued neural networks, are more appropriately described with complex-valued parameters. In such instances, it is possible to translate the parameters from $\C^{n}$ to $\R^{2n}$ and perform classical SGD. However, it has been noted this has less-than-desirable consequences such as an increased parameter count and a worse learning convergence properties \cite{complex_GD}; indeed, using an alternative definition of a gradient such as the  Wirtinger gradient (which allows for complex inputs) has proven to be more useful  \cite{brandwood1983complex,kreutzdelgado2009complex}, and has been studied in \cite{widrow1975complex,brandwood1983complex,vandenBos1994complex,hjorungnes2007complex,koor2023wirtinger,xu2015convergence,zhang2014convergence}. In particular, the authors of \cite{xu2015convergence,zhang2014convergence} have provided monotonicity results and some convergence results, but assumptions about analyticity are made. We later show that using the same principles from Wirtinger calculus yields a gradient definition that, under assumptions almost identical to the real setting, have provable convergence guarantees that make no assumptions about analyticity.
    
    In the real-setting for linear regression problems, there has been work demonstrating that SGD, depending on the choice of step-size, experiences directional bias \cite{dir_bias_sgd, luo2022directional}. To the best of the authors' knowledge, these results have not been extended to the complex setting. Because of the connection to another known algorithm, the random Kaczmarz method (RK), we elect to show that for overdetermined systems, a small step-size choice yields convergence primarily in the direction associated with the smallest singular value similarly to the real setting.
    
    In particular, the results of SGD are naturally applicable to kernel regression. For real reproducing kernel Hilbert spaces (RKHS's), the celebrated representer theorem, given by Kimeldorf and Wahba in 1971, reduces the problem of minimizing a regularized least-squares loss over an infinite-dimensional Hilbert space to a finite-dimensional space of coefficient vectors \cite{kimeldorf1971tchebycheffian}. However, a recent work \cite{AlpayDeMartinoDiki2025ComplexRepresenter} has shown this theorem may be extended to complex RKHS's, in which case classical SGD may no longer be employed. This can be remedied by using a complex version of SGD. With this tool, we may recover some examples of optimal functions for complex regularized least-squares problems derived analytically in \cite{AlpayDeMartinoDiki2025ComplexRepresenter}. One such function is the superoscillation sequence in the Fock space, which play an important role in quantum mechanics, signal analysis, and superresolution imaging \cite{JordanHowellVamivakasKarimi2025Superoscillations,ChenWenQiu2019Superoscillation}. In the radial basis function (RBF) space, the optimal function is the RBF supershift of the first kind. The associated kernel is commonly found in approximation theory and widely used in machine learning \cite{broomhead1988radial,hofmann2008kernel}. Finally, Blaschke product, which are the optimal function in the Hardy space, are fundamental objects in complex analysis and play an important role in operator theory, dynamical systems, and signal processing \cite{FL,GMW,mashreghi2013blaschke}. 

\section{Notation}
    \label{sec:notation}
    Complex vectors will be expressed as $\z=(z_1,\dots, z_n)^T\in \C^n$, and the vector whose entries are the complex conjugate of the entries of $\z$, i.e. $(\cl{z_1},\dots, \cl{z_n})^T$, will be denoted by $\cl{\z}$. For a matrix $A\in \C^{m\times n}$, $A\dl$ will denote the conjugate transpose of $A$, as usual.
    
    The standard complex inner product of $\C^n$ over $\C$ will be denoted by
    \[
    \lll \z,\w\rrr \Def \sum_{i=1}^n z_i\cl{w_i},
    \]
    where $\z,\w\in\C^n$. The modified inner product of the vector space $\C^n$ over $\R$ will be defined as
    \[
    \lll \z,\w\rrr_\R \Def \Re (\lll \z,\w\rrr),
    \]
    for $\z,\w\in\C^n$.
    
    The Wirtinger derivative of $f(z)$ where $z=x+iy$ for $x,y\in \R$ will be denoted as
    \[
    \del_{\cl{z}}f=\frac{\del f}{\del \cl{z}}=\frac{1}{2}\groupp{\frac{\del f}{\del x}+i\frac{\del f}{\del y}},
    \]
    when $f$ is not analytic, and 
    \[
        \del_{z}f=\frac{\del f}{\del z}=\frac{1}{2}\groupp{\frac{\del f}{\del x}-i\frac{\del f}{\del y}},
    \]
    when $f$ is analytic. For $\z\in \C^n$, we will write
    \[
    \nabla_{\cl{\z}}f=(\del_{\cl{z_1}}f,\dots, \del_{\cl{z_n}}f)^T.
    \]
    to mean the Wirtinger gradient of $f$ when $f$ is not analytic. Otherwise, we will use 
    \[
        \nabla_\z f=(\del_{z_1}f,\dots, \del_{z_n}f)^T
    \]
    The term $\nabla f$ will assume its normal definition in the event that the domain of $f$ is real-valued, and when the domain is complex-valued, we will define $\nabla f \Def 2\nabla_{\cl{\z}}f$ for non-analytic $f$ and $\nabla f\Def 2\nabla_\z f$ for analytic $f$. We will call $\nabla f$ the gradient of $f$ in either case when no confusion may arise. If distinction between the cases is required, we will call the former the real gradient and the latter the complex gradient.

\section{Background}

    \subsection{Classical SGD}
    \label{sec:classical_sgd_background}
        In the classical implementation of SGD, the goal is to minimize an objective function $F:\R^n\ra \R$ of the form
        \[
            F(\x)=\E_{\omega\sim \mcal D} f(\x;\omega)
        \]
        by iteratively updating the $t$-th iterate in the negative direction of the sampled gradient $\nabla f(\x^{(t)};\omega)$, i.e.
        \begin{equation}
            \x^{(t+1)} = \x^{(t)} - \eta_t \nabla f(\x^{(t)};\omega), \label{eq:sgd_update}
        \end{equation}
        where $\eta_t$ is the $t$-th step size, or learning rate, and $\omega$ is sampled independently from a distribution $\mcal D$.
        
        There are many convergence results for SGD showing either $\x^{(k)}\ra \x_\ast$ or $F(\x^{(k)})\ra F(\x_\ast)$ where $\x_\ast\in \argmin F$. Because our work is largely focused on showing SGD may be extended to loss or objective functions accepting complex inputs under the right conditions, we have chosen three elementary classical SGD results presented in \cite{handbook} to reproduce in the complex setting. The conclusions of the following three theorems are identical to the complex version later presented in \Cref{sec:complex_results}, but the hypotheses are slightly different. Of course, more refined results, such as those found in \cite{moulines2011non} and \cite{needell2014stochastic} are easily adaptable using the same (or a very similar) set of assumptions presented in \Cref{sec:problem_setup}.
        
        \begin{theorem}[Convergence in polynomial time (Classical Setting \cite{handbook})]
        \label{thm:avg_iter_converge} 
            Suppose $F(\x)=\E_{\omega\sim \mcal D}f(\x;\omega)$ is bounded below and for each $\omega$, $f(\x;\omega)$ is convex $\x$ and $\nabla f(\x;\omega)$ is $L$-Lipschitz. If $\E\|\nabla f(\x_\ast;\omega)\|^2<\infty$, $T\geq 1$, and $\eta_t < \frac{1}{4L}$ for all $t\leq   T$, then
            \[
                \E\groupb{F\groupp{\sum_{t=0}^{T-1}\frac{\eta_t}{\sum_{i=0}^{T-1}\eta_i}\x^{(t)}}-F(\x_\ast)} \leq \frac{\|\x^{(0)}-\x_\ast\|^2}{\sum_{t=0}^{T-1} \eta_t}+\frac{2\s_\ast \sum_{t=0}^{T-1}\eta_t^2}{\sum_{t=0}^{T-1} \eta_t},
            \]
            where $\x^{(t)}$ is the $t$-th classical SGD iterate generated by \Cref{eq:sgd_update} and $\s_\ast\Def \E\|\nabla f(\x_\ast;\omega)\|^2<\infty$.
        \end{theorem}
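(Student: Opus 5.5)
The plan is to run the standard one-step expansion of the squared distance to $\x_\ast$, control the stochastic gradient's second moment by co-coercivity, and then telescope and apply Jensen's inequality to the weighted average of the iterates. Throughout, write $g_t \Def \nabla f(\x^{(t)};\omega_t)$. Expanding the update \Cref{eq:sgd_update} gives
\[
\|\x^{(t+1)}-\x_\ast\|^2 = \|\x^{(t)}-\x_\ast\|^2 - 2\eta_t\lll g_t,\x^{(t)}-\x_\ast\rrr + \eta_t^2\|g_t\|^2 .
\]
We take conditional expectation given $\x^{(t)}$; this is legitimate because $\omega_t$ is independent of the past. The middle term then becomes $-2\eta_t\lll \nabla F(\x^{(t)}),\x^{(t)}-\x_\ast\rrr$. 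By convexity of $F$, which inherits convexity from each $f(\cdot;\omega)$, this is at most $-2\eta_t\,(F(\x^{(t)})-F(\x_\ast))$.

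The key estimate is a bound on $\E\|g_t\|^2$ by the suboptimality gap plus the gradient noise at the optimum. First split
\[
\|g_t\|^2 \le 2\|\nabla f(\x^{(t)};\omega)-\nabla f(\x_\ast;\omega)\|^2 + 2\|\nabla f(\x_\ast;\omega)\|^2 .
\]
Since each $f(\cdot;\omega)$ is convex with $L$-Lipschitz gradient, co-coercivity (the Baillon--Haddad type inequality) gives
\[
\|\nabla f(\x;\omega)-\nabla f(\x_\ast;\omega)\|^2 \le 2L\bigl(f(\x;\omega)-f(\x_\ast;\omega)-\lll \nabla f(\x_\ast;\omega),\x-\x_\ast\rrr\bigr).
\]
Taking expectation in $\omega$ and using $\E\nabla f(\x_\ast;\omega)=\nabla F(\x_\ast)=0$ (interchanging gradient and expectation, with $\x_\ast$ a minimizer), we obtain
\[
\E\|g_t\|^2 \le 4L\,(F(\x^{(t)})-F(\x_\ast)) + 2\s_\ast .
\]

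Substituting this bound into the expansion and taking full expectations yields
\[
\E\|\x^{(t+1)}-\x_\ast\|^2 \le \E\|\x^{(t)}-\x_\ast\|^2 - 2\eta_t(1-2L\eta_t)\,\E[F(\x^{(t)})-F(\x_\ast)] + 2\eta_t^2\s_\ast .
\]
The condition $\eta_t<\frac1{4L}$ gives $1-2L\eta_t\ge \tfrac12$, so the coefficient is at least $\eta_t$. Summing over $t=0,\dots,T-1$ and dropping $\E\|\x^{(T)}-\x_\ast\|^2\ge 0$ gives
\[
\sum_t \eta_t\,\E[F(\x^{(t)})-F(\x_\ast)] \le \|\x^{(0)}-\x_\ast\|^2 + 2\s_\ast\sum_t\eta_t^2 .
\]
Dividing by $\sum_t\eta_t$ and applying Jensen's inequality (convexity of $F$) to the weighted average $\sum_t \frac{\eta_t}{\sum_i\eta_i}\x^{(t)}$ gives exactly the claimed bound.

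The main obstacle is the variance step. It needs co-coercivity for each sampled $f(\cdot;\omega)$ and the identity $\E\nabla f(\x_\ast;\omega)=0$, which requires differentiating under the expectation. I would assume enough integrability to justify this interchange, which is implicit in $\s_\ast<\infty$ and the Lipschitz hypothesis. The remaining steps are bookkeeping with constants.
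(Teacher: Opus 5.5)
Your proposal is correct and takes essentially the same route as the paper. The paper proves the complex analogue (\Cref{thm:avg_iter_converge_complex}) and recovers this statement as its real special case, using the same steps: expand $\|\x^{(t+1)}-\x_\ast\|^2$, bound the cross term by convexity, bound $\E\|\nabla f(\x^{(t)};\omega)\|^2$ by $4L(F(\x^{(t)})-F(\x_\ast))+2\s_\ast$, telescope, and apply Jensen. The only difference is that you cite co-coercivity as a known fact, whereas the paper's lemma derives the same inequality from the descent inequality and first-order convexity evaluated at $\y-\frac{1}{L}(\nabla f(\y)-\nabla f(\x))$; the gradient--expectation interchange you flag is exactly what the paper imposes as its unbiasedness and stationary-minima assumptions.
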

        
        \begin{theorem}[Exponential Convergence (Classical Setting \cite{handbook})]
        \label{thm:mu_convex_convergence}
            Under the same assumptions as in \Cref{thm:avg_iter_converge} and the additional assumption that $F$ is $\mu$-strongly convex, the sequence of iterations $\{\x^{(t)}\}_{t=0}^\infty$ generated by \Cref{eq:sgd_update} with step sizes $\eta_t$ satisfying $0<c\leq   \eta_t<\frac{1}{2L}$ for all $t$ satisfies
            \[
                \E\|\x^{(t+1)}-\x_\ast\|^2\leq (1-c\mu)^{t+1}\|\z^{(0)}-\x_\ast\|^2+\frac{\s_\ast}{2L^2c\mu},
            \]
            where $\s_\ast \Def \E\|\nabla f(\x_\ast;\omega)\|^2<\infty$. In the case that $\eta_t$ is constant, i.e. $\eta_t= \eta$ for all $t$, we may instead write
            \[
                \E\|\x^{(t+1)}-\x_\ast\|^2\leq (1-\eta \mu)^{t+1}\|\x^{(0)}-\x_\ast\|^2+\frac{2\s_\ast \eta}{\mu}.
            \]
        \end{theorem}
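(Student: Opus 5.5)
The plan is the standard one-step contraction argument for $\E\|\x^{(t)}-\x_\ast\|^2$, followed by unrolling the resulting recursion. Write $g_t\Def\nabla f(\x^{(t)};\omega_t)$ and $e_t\Def\E\|\x^{(t)}-\x_\ast\|^2$. Expanding the update \Cref{eq:sgd_update} gives
\[
\|\x^{(t+1)}-\x_\ast\|^2=\|\x^{(t)}-\x_\ast\|^2-2\eta_t\langle g_t,\x^{(t)}-\x_\ast\rangle+\eta_t^2\|g_t\|^2 .
\]
I will take the conditional expectation given $\x^{(t)}$. Since $\omega_t$ is independent of $\x^{(t)}$, we have $\E[g_t\mid \x^{(t)}]=\nabla F(\x^{(t)})$, so the cross term becomes $-2\eta_t\langle \nabla F(\x^{(t)}),\x^{(t)}-\x_\ast\rangle$. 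By $\mu$-strong convexity of $F$ this is at most $-2\eta_t\bigl(F(\x^{(t)})-F(\x_\ast)\bigr)-\eta_t\mu\|\x^{(t)}-\x_\ast\|^2$.

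The key step, which I expect to be the main obstacle, is a bound on the second moment of the stochastic gradient of the form
\[
\E\|\nabla f(\x;\omega)\|^2\le 4L\bigl(F(\x)-F(\x_\ast)\bigr)+2\s_\ast .
\]
To get it, I first split $\|\nabla f(\x;\omega)\|^2\le 2\|\nabla f(\x;\omega)-\nabla f(\x_\ast;\omega)\|^2+2\|\nabla f(\x_\ast;\omega)\|^2$. Next I use the co-coercivity inequality for convex $L$-smooth functions,
\[
\|\nabla f(\x;\omega)-\nabla f(\x_\ast;\omega)\|^2\le 2L\bigl(f(\x;\omega)-f(\x_\ast;\omega)-\langle\nabla f(\x_\ast;\omega),\x-\x_\ast\rangle\bigr).
\]
Finally I take expectations. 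The linear term vanishes because $\E\nabla f(\x_\ast;\omega)=\nabla F(\x_\ast)=0$; this requires exchanging gradient and expectation, which the integrability hypotheses justify.

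Substituting both estimates into the expansion yields
\[
e_{t+1}\le (1-\eta_t\mu)e_t-2\eta_t(1-2\eta_t L)\,\E\bigl[F(\x^{(t)})-F(\x_\ast)\bigr]+2\eta_t^2\s_\ast .
\]
Since $\eta_t<\frac{1}{2L}$, the middle term is nonpositive and can be dropped. Using $\eta_t\ge c$ gives $1-\eta_t\mu\le 1-c\mu$, and using $\eta_t<\frac1{2L}$ gives $2\eta_t^2\s_\ast\le \s_\ast/(2L^2)$. The contraction factor is valid, i.e. $0\le 1-c\mu<1$, because $c\mu<\mu/(2L)\le\frac12$, using $\mu\le L$, which follows from strong convexity together with $L$-smoothness.

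The recursion is therefore $e_{t+1}\le(1-c\mu)e_t+\s_\ast/(2L^2)$. Unrolling it and bounding the geometric sum by $\sum_{k\ge0}(1-c\mu)^k=1/(c\mu)$ gives
\[
e_{t+1}\le(1-c\mu)^{t+1}\|\x^{(0)}-\x_\ast\|^2+\frac{\s_\ast}{2L^2c\mu}.
\]
The $\z^{(0)}$ in the statement should read $\x^{(0)}$. For constant $\eta_t=\eta$, I instead keep the exact noise term $2\eta^2\s_\ast$ and contraction factor $1-\eta\mu$. The geometric sum then gives $2\eta^2\s_\ast/(\eta\mu)=2\eta\s_\ast/\mu$, which is the second bound.
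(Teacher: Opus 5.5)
Your proposal is correct and follows essentially the same route as the paper's proof, which the paper gives for the complex analogue \Cref{thm:mu_convex_convergence_complex}, adapted from the handbook. Both arguments expand $\|\x^{(t+1)}-\x_\ast\|^2$, apply strong convexity to the cross term, and bound the second moment by $4L\bigl(F(\x^{(t)})-F(\x_\ast)\bigr)+2\s_\ast$ via the co-coercivity-type lemma; they then drop the function-gap term using $\eta_t<\frac{1}{2L}$ and unroll with $1-\eta_t\mu\le 1-c\mu$ and $2\eta_t^2\s_\ast\le \s_\ast/(2L^2)$. Your extra checks, that $0\le 1-c\mu<1$ via $\mu\le L$ and that gradient and expectation can be exchanged at $\x_\ast$, are details the paper leaves implicit.
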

        
        \begin{theorem}[Stationary Convergence of SGD with Adaptive Step Sizes (Classical Setting \cite{handbook})]
        \label{thm:stationary_adaptive}
            Suppose the same assumptions as in \Cref{thm:avg_iter_converge} hold along with $\E\|\nabla f(\x;\omega)-\nabla F(\x)\|^2\leq   \s^2$ for some $\s>0$.
            If $\{\x^{(t)}\}_{t\geq 0}$ is the sequence of generated by \Cref{eq:sgd_update} with step sizes satisfying
            \[
                0<\eta_t\leq \frac{1}{L}\qquad (t\geq 0),
            \]
            then for every $T\geq 1$,
            \[
                \sum_{t=0}^{T-1}\eta_t\,\E\|\nabla F(\x^{(t)})\|^2 \leq 2\big(F(\x^{(0)})-F_\star\big) + L\sigma^2\sum_{t=0}^{T-1}\eta_t^2,
            \]
            where $F_\star:=\inf_{\z\in\C^n}F(\z)>-\infty$.
            Consequently,
            \[
                \min_{0\leq   t\leq   T-1}\E\|\nabla F(\x^{(t)})\|^2 \leq \frac{2(F(\x^{(0)})-F_\star)+L\sigma^2\sum_{t=0}^{T-1}\eta_t^2}{\sum_{t=0}^{T-1}\eta_t}.
            \]
            In particular, if $\eta_t=\frac{1}{L\sqrt{T}}$ for $t=0,\dots,T-1$, then
            \[
                \min_{0\leq t\leq T-1}\E\|\nabla F(\x^{(t)})\|^2\leq \frac{2L(F(\x^{(0)})-F_\star)+\sigma^2}{\sqrt{T}}.
            \]
        \end{theorem}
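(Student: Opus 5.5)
The plan is the standard descent-lemma argument for $L$-smooth objectives, summed over iterations.

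\textbf{Step 1: descent lemma for $F$.} Since each $\nabla f(\cdot;\omega)$ is $L$-Lipschitz, $\nabla F=\E\nabla f(\cdot;\omega)$ is $L$-Lipschitz. This uses dominated convergence to exchange the gradient and the expectation, which is justified by the finiteness of $\s_\ast$ together with the Lipschitz bound. Consequently
\[
F(\y)\leq F(\x)+\lll \nabla F(\x),\y-\x\rrr+\frac L2\|\y-\x\|^2 .
\]
Apply this with $\x=\x^{(t)}$ and $\y=\x^{(t+1)}=\x^{(t)}-\eta_t\nabla f(\x^{(t)};\omega_t)$ to get
\[
F(\x^{(t+1)})\leq F(\x^{(t)})-\eta_t\lll \nabla F(\x^{(t)}),\nabla f(\x^{(t)};\omega_t)\rrr+\frac{L\eta_t^2}{2}\|\nabla f(\x^{(t)};\omega_t)\|^2 .
\]

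\textbf{Step 2: conditional expectation.} Condition on $\x^{(t)}$. Because $\omega_t$ is drawn independently, $\E_t\nabla f(\x^{(t)};\omega_t)=\nabla F(\x^{(t)})$. The bias--variance decomposition then gives
\[
\E_t\|\nabla f(\x^{(t)};\omega_t)\|^2=\|\nabla F(\x^{(t)})\|^2+\E_t\|\nabla f(\x^{(t)};\omega_t)-\nabla F(\x^{(t)})\|^2\leq \|\nabla F(\x^{(t)})\|^2+\s^2 .
\]
Substituting this into Step 1 yields
\[
\E_t F(\x^{(t+1)})\leq F(\x^{(t)})-\eta_t\Bigl(1-\frac{L\eta_t}{2}\Bigr)\|\nabla F(\x^{(t)})\|^2+\frac{L\eta_t^2\s^2}{2}.
\]
The condition $\eta_t\leq 1/L$ gives $1-L\eta_t/2\geq 1/2$, so
\[
\E_t F(\x^{(t+1)})\leq F(\x^{(t)})-\frac{\eta_t}{2}\|\nabla F(\x^{(t)})\|^2+\frac{L\eta_t^2\s^2}{2}.
\]

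\textbf{Step 3: telescoping.} Take total expectations, multiply by $2$, and sum over $t=0,\dots,T-1$. The sum telescopes to
\[
\sum_{t=0}^{T-1}\eta_t\E\|\nabla F(\x^{(t)})\|^2\leq 2\bigl(F(\x^{(0)})-\E F(\x^{(T)})\bigr)+L\s^2\sum_{t=0}^{T-1}\eta_t^2 .
\]
Bounding $\E F(\x^{(T)})\geq F_\star$ gives the first claim. For the second claim, lower-bound the left side by $\min_t\E\|\nabla F(\x^{(t)})\|^2\cdot\sum_t\eta_t$ and divide. For the third claim, substitute $\eta_t=1/(L\sqrt T)$: then $\sum_t\eta_t=\sqrt T/L$ and $\sum_t\eta_t^2=1/L^2$, and the bound simplifies to $\bigl(2L(F(\x^{(0)})-F_\star)+\s^2\bigr)/\sqrt T$.

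\textbf{Main obstacle.} The only delicate point is Step 1: establishing that $F$ is differentiable with $\nabla F=\E\nabla f$ and that $\nabla F$ is $L$-Lipschitz. Once the descent lemma for $F$ is in hand, the rest is routine. Convexity is not needed for this argument; the infimum $F_\star$ in the statement should be read over the parameter space $\R^n$.
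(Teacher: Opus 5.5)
Your proposal is correct and follows essentially the same route as the paper's proof, which is written out in the appendix for the complex analogue: the descent inequality for $F$ along the SGD step, conditional expectation with unbiasedness and the bias--variance split, the bound $1-L\eta_t/2\geq 1/2$, and telescoping against $F_\star$. The one difference is that the paper takes the descent inequality for $F$ as an assumption, whereas you derive it from the Lipschitz component gradients by interchanging gradient and expectation. That derivation is a harmless and welcome extra justification in the classical setting.
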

    
    \subsection{Complex Gradient and Wirtinger Calculus}
        There are many optimization problems performed over complex vector spaces, in which case the standard definition of a gradient becomes meaningless. 
        However, there is an appropriate replacement given by the theory of Wirtinger (or CR) calculus \cite{widrow1975complex,brandwood1983complex,vandenBos1994complex,hjorungnes2007complex,kreutzdelgado2009complex,koor2023wirtinger}, which is given by $2\nabla_\z f$ for analytic $f$ and $2\nabla_{\cl{\z}}f$ for non-analytic $f$, as introduced in \Cref{sec:notation}. This notion also extends to the bi-complex setting \cite{alpay2024extension}. When $f$ is real-valued, the direction of steepest ascent is determined by $\nabla_{\cl{\z}}f$ (resp. $\nabla_\z f)$); equivalently, the direction of steepest descent is
        $-\nabla_{\cl{\z}}f$ (resp. $-\nabla_\z f$) \cite{hjorungnes2007complex,kreutzdelgado2009complex}. 
        \begin{remark}
            It is worth emphasizing that  literature is not fully standardized.
            Different authors use different conventions for the ``complex gradient.'' Some define the descent direction by $\nabla_{\cl{\z}}f$, others by $2\nabla_{\cl{z}}f$ (resp. $\nabla_\z f$ and $2\nabla_\z f$), and others organize derivatives as row Jacobians rather than column gradients, possibly followed by transpose or Hermitian transpose operations. Consequently,the precise formulations of each algorithm may differ up to a transpose, conjugation, or a constant factor, which may be absorbed into the step size. For this reason, when reading the literature one should always check the precise
            derivative convention and update rule, rather than relying solely on the notation $\nabla_z f$ or $\nabla_{\bar z} f$ \cite{vandenBos1994complex,hjorungnes2007complex,kreutzdelgado2009complex}.
        \end{remark}
        
        The use of complex steepest-descent methods predates the recent optimization literature. Early instances appear in the complex LMS algorithm and in adaptive-array theory \cite{widrow1975complex,brandwood1983complex}, while later works developed systematic frameworks for complex gradients, Hessians, and matrix derivatives \cite{vandenBos1994complex,hjorungnes2007complex,kreutzdelgado2009complex}. Furthermore, some convergence results are available in several problem-specific complex-valued settings, including phase retrieval and complex-valued neural networks \cite{candes2015wirtinger,zhang2014convergence,xu2015convergence}.
    
    \subsection{Kernel Regression}
        \label{sec:kernel-regression}
        In many applied mathematical and physical problems, one wishes to approximate or learn a function from a finite set of observations. This can be formulated as a minimization problem over a Hilbert space $\mathcal{H}$. More precisely, we seek to recover
        \[
            f_*=\underset{f\in\mathcal{H}}{\arg\min}\,F(f),
        \]
        where $F(f)$ is a functional measuring approximation error with an optional regularization term.
        
        A particular case of interest is when $\mathcal{H}$ is a reproducing kernel Hilbert space (see \Cref{sec:rkhs} for more), in which case the problem admits a elegant solution structure. If $K$ is the reproducing kernel of $\mathcal{H}$, then the representer theorem (\Cref{thm:crt}) states that for a large class of loss functionals $F$, the solution $f_\ast$ minimizing $F$ must belong to the finite-dimensional span of the kernel sections associated with the data points, i.e. $f_\ast(\cdot) = \sum_{i=1}^n \alpha_i K(\cdot, z_i)$ for $\alpha_i\in \C$.
        
        For the particular case where $F$ is a regularized least-squares functional defined over a data set $\{(\z_i,y_i)\}_{i=1}^n$
        \begin{equation}
            \label{eq:kernel-LS-functional}
            F(f)
            =
            \sum_{i=1}^n |y_i-f(\z_i)|^2+\lambda\|f\|_{\mathcal H}^2,
            \qquad \lambda>0,
        \end{equation}
        we see that finding an $f_\ast$ minimizing $F$ is equivalent to finding an $\balpha = (\alpha_1,\dots, \alpha_n)^T$ that minimizes \Cref{eq:Falpha-kernel}, i.e.
        \begin{equation}
            \label{eq:Falpha-kernel}
            F(\balpha)
            =
            \frac{1}{2}\|\by-K\balpha\|^2+\lambda\,\balpha^*K\balpha
            =
            \frac{1}{2}(\by-K\balpha)^*(\by-K\balpha)+\lambda\,\balpha^*K\balpha,
        \end{equation}
        and then setting $f_\ast\Def \sum_{i=1}^n \alpha_i K(\cdot, \z_i)$. Here, we use $\y=(y_1,\dots,y_n)^T$ and $K=\{K(\z_i,\z_j)\}_{1\leq   i,j\leq   n}$.
        
        We note that the classical representer theorem requires a regularization term. However, in the case that we are only interested in minimizing
        \[
            F(f)=\sum_{i=1}^n |y_i-f(\z_i)|^2,
        \]
        the proof may be adapted to show that at least one $f_\ast \in \argmin_{f\in\mcal H} F(f)$ has the form $f_\ast(\cdot)=\sum_{i=1}^n \alpha_i K(\cdot, \z_i)$, in which case we may alternatively minimize
        \[
            F(\balpha)=\frac{1}{2}\|\by-K\balpha\|^2.
        \]
        Alternatively, using \Cref{prop:K_plus_lamba_I}, one may minimize a similar objective that simultaneously accounts for the regularization term:
        \[
            F(\balpha) = \frac{1}{2}\|y-(K+\lambda I)\alpha\|^2.
        \]

\section{Contributions}
    First, \Cref{sec:complex_results} presents complex-SGD and its main convergence guarantees (\Cref{thm:avg_iter_converge_complex},\Cref{thm:mu_convex_convergence_complex},\Cref{thm:stationary_adaptive}) under the mild assumptions outlined in \Cref{sec:problem_setup}. These guarantees closely parallel the corresponding results in the real setting; importantly, the new assumptions are almost identical to those typically imposed in the real-valued case.
    
    Following this, \Cref{sec:directional_bias} briefly connects complex-SGD to the random Kaczmarz method, whose convergence guarantees remain valid in the complex setting. This connection motivates a complex extension of the directional bias results for small step-sizes from \cite{dir_bias_sgd,luo2022directional}. This extension is presented in \Cref{cor:dir_bias}.
    
    Finally, \Cref{sec:applications_empirical} demonstrates the efficacy of complex-SGD in recovering the optimal functions for varying RKHS's outlined in \cite{AlpayDeMartinoDiki2025ComplexRepresenter} up to an arbitrary target tolerance. In particular, we present numerical demonstrations of the recovery of superoscillations in the Fock space (\Cref{sec:fock-superosc}), RBF supershifts in the RBF space \Cref{rmk:rbf-numerics}, and Blaschke products in the Hardy space \Cref{Hardy_sub}.
    
    All proofs and computations for the preceding results are supplied in \Cref{sec:proofs}, \Cref{sec:directional_bias_proof}, and \Cref{sec:least_squares_example}.

\section{Complex SGD}\label{sec:csgd}
    \subsection{Problem Setup}
    \label{sec:problem_setup}
        As in the classical setting we hope to minimize a loss given by a real-valued convex function $F$ of the form $F(\z)=\E_{\omega\sim \mcal D}[f(\z;\omega)]$, with appropriate conditions imposed on the components $f(\,\cdot\,;\omega)$. However, we now allow each $f(\,\cdot\,;\omega)$, and by extension $F$, to accept complex inputs. The iterates produced by the complex variant of SGD will take the same form as in the classical case:
        \begin{equation}
            \z^{(t+1)}=\z^{(t)}-\eta_t \nabla f(\z^{(t)};\omega),
            \label{eq:sgd_update_complex}
        \end{equation}
        where $\omega$ is sampled in accordance to the the distribution $\mcal D$. Of course, $\nabla f(\z;\omega)$ is now the \textit{complex} gradient of $f$. To accommodate this change, we make the following assumptions:
        \begin{adjustwidth}{2em}{2em}
            \begin{assum}[Unbiasedness]
                For all $\z\in \C^n$,
                \[
                    \E_{\omega\sim\mcal D}\bigl[\nabla f(\z;\omega)\bigr]=\nabla F(\z).
                \]
                We note that when $\mcal D$ is a finite discrete distribution, this is implied by the linearity of the complex gradient and $F(\z)=\E_{\omega\sim \mcal D}[f(\z;\omega)]$.
                \label{assum:unbiased_complex}
            \end{assum}
            
            \begin{assum}[Bounded Variance]
                There exists some $\s^2>0$ such that,
                \[
                    \E\groupb{\|\nabla f(\z;\omega)-\nabla F(\z)\|^2}\leq   \s^2.
                \]
                \label{assum:bounded_var_complex}
            \end{assum}
            
            \begin{assum}[Bounded Below]
                $F$ is bounded below, i.e. $\inf_{\z\in\C^n} F(\z)>-\infty$.
                \label{assum:bounded_below_complex}
            \end{assum}
            
            \begin{assum}[Minima is Stationary]
                For any $\zstar\in \C^n$ such that $F(\zstar)=\inf_{\z\in\C^n}F(\z)$, we have $\nabla F(\zstar)=0$.
                \label{assum:stationary_minima_complex}
            \end{assum}
            
            \begin{assum}[$L$-Lipschitz Alternative]
                There exists some constant $L>0$ such that for all $\z,\w\in \C^n$,
                \[
                    f(\w;\omega)\leq   f(\z;\omega)+\lll \nabla f(\z;\omega),\w-\z\rrr_\R+\frac{L}{2}\|\w-\z\|^2 \quad (a.s.).
                \]
                Note that this immediately implies the same inequality holds with $F$ as well.
                \label{assum:L_lip_complex}
            \end{assum}
            
            \begin{assum}[Convex Alternative]
                The components $f(\,\cdot\,;\omega)$ satisfies
                \[
                    f(\w;\omega)\geq f(\z;\omega) + \lll \nabla f(\z;\omega),\w-\z\rrr_\R \quad (a.s.).
                \]
                Note that this immediately implies the same inequality holds with $F$ as well.
                \label{assum:convex_complex}
            \end{assum}
            
            \begin{assum}[$\mu$-Strongly Convex Alternative]
                There exists some constant $\mu>0$ such that for all $\z,\w\in \C^n$,
                \[
                    F(\w)\geq F(\z) + \lll \nabla F(\z),\w-\z\rrr_\R + \frac{\mu}{2} \|\w-\z\|^2.
                \]
                \label{assum:mu_convex_complex}
            \end{assum}
        \end{adjustwidth}
        
        We note that the \Cref{assum:unbiased_complex}, \Cref{assum:bounded_var_complex}, and \Cref{assum:bounded_below_complex} are identical to the assumptions required for classical SGD. However, because the gradient is not truly gradient in the usual sense, we must impose \Cref{assum:stationary_minima_complex} to ensure that any minima is stationary. \Cref{assum:L_lip_complex}, \Cref{assum:convex_complex}, and \Cref{assum:mu_convex_complex} hold true in the real setting when $\nabla f$ is $L$-Lipschitz, $f$ is convex, and $F$ is $\mu$-strongly convex, respectively. However, these implications rely on $\nabla f$ being a gradient in the classical sense, so just as was done with the stationary minima assumption, we must impose these assumptions separately. It is natural to ask whether this collection of assumptions is satisfiable for the complex gradient. In \Cref{sec:validity_of_assumptions}, we show that the least-squares objective satisfies the full list of assumptions.
        
        As a final remark, we note that whenever $F$ and each $f(\cdot;\omega)$ only accepts real-valued inputs, the update rule \Cref{eq:sgd_update_complex} is identical to to the classical setting given by \Cref{eq:sgd_update}. Moreover, all the presented assumptions reduce to their real counterparts with $\lll \cdot,\cdot\rrr$ in place of $\lll \cdot, \cdot\rrr_\R$, whereby the classical convergence results are recovered.
    
    \subsection{Complex SGD Results}
    \label{sec:complex_results}
        In this section, we present the convergence results the assumptions of \Cref{sec:problem_setup} allow for. We note that the only difference between the following theorems and the classical counterpart presented in \Cref{sec:classical_sgd_background} lies in the hypotheses. Even so, the hypotheses are nearly identical to equivalent versions of the classical hypotheses. This is an artifact of having found a suitable definition of a gradient for the complex setting that respects similar conditions with a minor adjustment to the inner product used. In fact, the proofs of the following results are effectively compressed proofs from \cite{handbook}. The first result states that the limit of the average of the iterates produced by complex SGD minimize $F$ when we do not impose a condition analogous to strong convexity onto $F$.
        
        \begin{theorem}[Convergence in polynomial time]
        \label{thm:avg_iter_converge_complex}
            Under the assumptions of \Cref{sec:problem_setup}, except for \Cref{assum:mu_convex_complex}, if $T\geq 1$ and $\eta_t < \frac{1}{4L}$ for all $t\leq T$, then
            \[
                \E\groupb{F\groupp{\sum_{t=0}^{T-1}\frac{\eta_t}{\sum_{i=0}^{T-1}\eta_i}\z^{(t)}}-F(\zstar)} \leq   \frac{\|\z^{(0)}-\zstar\|^2}{\sum_{t=0}^{T-1} \eta_t}+\frac{2\s_\ast \sum_{t=0}^{T-1}\eta_t^2}{\sum_{t=0}^{T-1} \eta_t},
            \]
            where $\z^{(t)}$ is the $t$-th complex SGD iterate generated by \Cref{eq:sgd_update_complex} and $\s_\ast\Def \E\|\nabla f(\zstar;\omega)\|^2<\infty$.
        \end{theorem}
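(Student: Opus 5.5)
The plan is to run the standard Gower-style argument from \cite{handbook}, treating $\C^n$ as a real inner-product space with $\lll\cdot,\cdot\rrr_\R$. The key observation is that $\|\z\|^2=\lll \z,\z\rrr_\R$ and $\|\z-\w\|^2=\|\z\|^2-2\lll \z,\w\rrr_\R+\|\w\|^2$. Every real-setting identity therefore transfers verbatim once the real inner product is replaced by $\lll\cdot,\cdot\rrr_\R$.

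\emph{Step 1: expand one iteration.} Writing $\w=\zstar$ and $\z^{(t+1)}=\z^{(t)}-\eta_t\nabla f(\z^{(t)};\omega_t)$, we get
\[
\|\z^{(t+1)}-\zstar\|^2=\|\z^{(t)}-\zstar\|^2-2\eta_t\lll \nabla f(\z^{(t)};\omega_t),\z^{(t)}-\zstar\rrr_\R+\eta_t^2\|\nabla f(\z^{(t)};\omega_t)\|^2 .
\]
The middle term is handled by \Cref{assum:convex_complex} with $\w=\zstar$, which gives $-\lll\nabla f(\z^{(t)};\omega_t),\z^{(t)}-\zstar\rrr_\R\le f(\zstar;\omega_t)-f(\z^{(t)};\omega_t)$. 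Taking the conditional expectation given $\z^{(t)}$ (independence of $\omega_t$) then yields $F(\zstar)-F(\z^{(t)})$.

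\emph{Step 2: bound the gradient norm.} This is the step I expect to be the main obstacle, because in the real case it relies on the co-coercivity consequence of convexity plus $L$-smoothness, and we must rederive it from the ``alternative'' assumptions alone. Fix $\omega$ and $\z$, and write $g(\w)=f(\w;\omega)-f(\zstar;\omega)-\lll\nabla f(\zstar;\omega),\w-\zstar\rrr_\R$. By \Cref{assum:convex_complex}, $g\ge0$. Since the linear term has gradient (in the $2\nabla_{\cl\z}$ convention) $\nabla f(\zstar;\omega)$, $g$ inherits \Cref{assum:L_lip_complex} with gradient $\nabla f(\w;\omega)-\nabla f(\zstar;\omega)$. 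I will check this carefully, because the expansion of $\lll \mathbf a,\w\rrr_\R$ must have gradient $\mathbf a$ under the factor-$2$ convention. Plugging $\w=\z-\frac1L\nabla g(\z)$ into \Cref{assum:L_lip_complex} and using $g\ge 0$ gives
\[
\|\nabla f(\z;\omega)-\nabla f(\zstar;\omega)\|^2\le 2L\bigl(f(\z;\omega)-f(\zstar;\omega)-\lll\nabla f(\zstar;\omega),\z-\zstar\rrr_\R\bigr).
\]
Taking expectations, \Cref{assum:unbiased_complex} gives $\E\nabla f(\zstar;\omega)=\nabla F(\zstar)=0$ by \Cref{assum:stationary_minima_complex}. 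Hence $\E\|\nabla f(\z;\omega)-\nabla f(\zstar;\omega)\|^2\le 2L(F(\z)-F(\zstar))$. Combining this with $\|a\|^2\le 2\|a-b\|^2+2\|b\|^2$ yields
\[
\E\|\nabla f(\z;\omega)\|^2\le 4L(F(\z)-F(\zstar))+2\s_\ast .
\]

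\emph{Step 3: telescope and apply Jensen.} Combining Steps 1 and 2 and taking full expectations gives
\[
2\eta_t(1-2\eta_tL)\,\E[F(\z^{(t)})-F(\zstar)]\le \E\|\z^{(t)}-\zstar\|^2-\E\|\z^{(t+1)}-\zstar\|^2+2\eta_t^2\s_\ast .
\]
Since $\eta_t<\frac1{4L}$, we have $1-2\eta_tL\ge\frac12$, so the left side is at least $\eta_t\,\E[F(\z^{(t)})-F(\zstar)]$. Summing over $t=0,\dots,T-1$, telescoping, dropping the last nonnegative distance, and dividing by $\sum\eta_t$ bounds the weighted average of $\E[F(\z^{(t)})-F(\zstar)]$. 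Finally, convexity of $F$ along real convex combinations follows from \Cref{assum:convex_complex}: it implies the usual Jensen inequality on $\C^n$ viewed as $\R^{2n}$, by the standard supporting-hyperplane argument at the averaged point. This transfers the bound to $F$ evaluated at the weighted average iterate, giving the claimed estimate.
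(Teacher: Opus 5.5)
Your proposal is correct and is essentially the paper's proof. The paper obtains $\frac{1}{2L}\E\|\nabla f(\z)-\nabla f(\zstar)\|^2\leq F(\z)-F(\zstar)$ by applying \Cref{assum:convex_complex} at $\zstar$ and \Cref{assum:L_lip_complex} at $\z$, both evaluated at the point $\z-\frac{1}{L}(\nabla f(\z)-\nabla f(\zstar))$; your auxiliary-function argument with $g$ uses the same two inequalities at the same point, and the expansion, the bound $\E\|\nabla f(\z)\|^2\leq 4L(F(\z)-F(\zstar))+2\s_\ast$, the telescoping and the Jensen step then coincide with the paper's. You are slightly more careful than the paper in two places: you justify the Jensen step from \Cref{assum:convex_complex}, which the paper uses without comment, and your convexity step has the correct sign $F(\zstar)-F(\z^{(t)})$, whereas the paper's corresponding display has a sign slip.
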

        
        For simplicity, \Cref{assum:bounded_var_complex} is left unchanged in the hypothesis, but the proof only requires $\E\|\nabla f(\zstar;\omega)\|^2\leq   \s^2<\infty$.
        
        In the case that an analogue to strong convexity holds, i.e. \Cref{assum:mu_convex_complex}, we see that the iterates of complex SGD converge (in expectation) exponentially to the minimizer of $F$ up to an error horizon. Again, do not need the full strength of \Cref{assum:bounded_var_complex} and can assume $\E\|\nabla f(\zstar;\omega)\|^2\leq\s^2<\infty$ instead.
        
        \begin{theorem}[Exponential Convergence]
        \label{thm:mu_convex_convergence_complex}
            Under the assumptions of \Cref{sec:problem_setup}, the sequence of iterations $\{\z^{(t)}\}_{t=0}^\infty$ generated by \Cref{eq:sgd_update_complex} with step sizes $\eta_t$ satisfying $0<c\leq   \eta_t<\frac{1}{2L}$ for all $t$ satisfies
            \[
                \E\|\z^{(t+1)}-\zstar\|^2\leq   (1-c\mu)^{t+1}\|\z^{(0)}-\zstar\|^2+\frac{\s_\ast}{2L^2c\mu},
            \]
            where $\s_\ast \Def \E\|\nabla f(\zstar;\omega)\|^2<\infty$. In the case that $\eta_t$ is constant, i.e. $\eta_t= \eta$ for all $t$, we may instead write
            \[
                \E\|\z^{(t+1)}-\zstar\|^2\leq   (1-\eta \mu)^{t+1}\|\z^{(0)}-\zstar\|^2+\frac{2\s_\ast \eta}{\mu}.
            \]
        \end{theorem}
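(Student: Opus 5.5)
The plan is the standard one-step contraction argument from \cite{handbook}, carried out with $\C^n$ viewed as a real Hilbert space under $\lll\cdot,\cdot\rrr_\R$. The norm induced by $\lll\cdot,\cdot\rrr_\R$ coincides with the usual norm on $\C^n$, so no real-variable identity is lost. Write $\z=\z^{(t)}$ and $F_\ast=F(\zstar)$. Expanding the update \Cref{eq:sgd_update_complex} gives
\[
\|\z^{(t+1)}-\zstar\|^2=\|\z-\zstar\|^2-2\eta_t\lll \nabla f(\z;\omega),\z-\zstar\rrr_\R+\eta_t^2\|\nabla f(\z;\omega)\|^2 .
\]
Next I take the conditional expectation given $\z$. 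Since $\Re$ and $\lll\cdot,\cdot\rrr$ are linear, \Cref{assum:unbiased_complex} turns the middle term into $-2\eta_t\lll\nabla F(\z),\z-\zstar\rrr_\R$. Applying \Cref{assum:mu_convex_complex} with $\w=\zstar$ gives
\[
\lll\nabla F(\z),\z-\zstar\rrr_\R\geq F(\z)-F_\ast+\tfrac{\mu}{2}\|\z-\zstar\|^2 .
\]

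The main obstacle is the second-moment bound $\E\|\nabla f(\z;\omega)\|^2\leq 4L(F(\z)-F_\ast)+2\s_\ast$. In the real case this comes from co-coercivity of a convex $L$-smooth gradient, and here I must rederive it from \Cref{assum:L_lip_complex} and \Cref{assum:convex_complex} alone. First I split $\|\nabla f(\z)\|^2\leq 2\|\nabla f(\z)-\nabla f(\zstar)\|^2+2\|\nabla f(\zstar)\|^2$. For the first piece, fix $\omega$ and set $g(\w)=f(\w;\omega)-\lll\nabla f(\zstar;\omega),\w\rrr_\R$. The complex gradient $2\nabla_{\cl{\w}}$ of the real-linear function $\Re\lll \w,\mathbf a\rrr$ equals $\mathbf a$, a one-line Wirtinger computation. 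Hence $\nabla g=\nabla f-\nabla f(\zstar)$, and $g$ satisfies both the convex and the $L$-smooth alternatives, since the linear term cancels exactly in each inequality. The convex alternative at $\zstar$ shows that $\zstar$ minimizes $g$. Applying the $L$-smooth alternative at $\w=\z-\frac1L\nabla g(\z)$ then gives
\[
g(\zstar)\leq g(\z)-\tfrac1{2L}\|\nabla g(\z)\|^2,
\]
that is,
\[
\|\nabla f(\z)-\nabla f(\zstar)\|^2\leq 2L\bigl(f(\z)-f(\zstar)-\lll\nabla f(\zstar),\z-\zstar\rrr_\R\bigr).
\]
Taking expectations, the inner-product term becomes $\lll\nabla F(\zstar),\z-\zstar\rrr_\R$, which vanishes by \Cref{assum:stationary_minima_complex}. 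This yields the claimed bound.

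Combining the two estimates gives
\[
\E\|\z^{(t+1)}-\zstar\|^2\leq(1-\eta_t\mu)\E\|\z^{(t)}-\zstar\|^2-2\eta_t(1-2\eta_tL)\,\E[F(\z^{(t)})-F_\ast]+2\eta_t^2\s_\ast .
\]
The middle term is nonpositive because $\eta_t<\frac1{2L}$, so I drop it. Then I use $1-\eta_t\mu\leq 1-c\mu$ and $2\eta_t^2\s_\ast\leq \s_\ast/(2L^2)$. I should check that $0\leq 1-c\mu$: combining \Cref{assum:mu_convex_complex} with the smooth alternative for $F$ gives $\mu\leq L$, hence $c\mu<\tfrac12$. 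Unrolling the recursion and bounding the geometric sum by $\sum_k(1-c\mu)^k\leq \frac1{c\mu}$ gives the first inequality. In the constant case $\eta_t=\eta$, the same unrolling with additive term $2\eta^2\s_\ast$ and ratio $1-\eta\mu$ gives the error horizon $2\eta^2\s_\ast/(\eta\mu)=2\s_\ast\eta/\mu$.
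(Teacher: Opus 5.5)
Your proposal is correct and follows the paper's proof essentially step for step. You expand the update, apply unbiasedness and the strong-convexity alternative, bound $\E\|\nabla f(\z^{(t)};\omega)\|^2\leq 4L(F(\z^{(t)})-F(\zstar))+2\s_\ast$, drop the nonpositive term using $\eta_t<\frac{1}{2L}$, and unroll the recursion. The differences are minor: you derive the co-coercivity bound via the tilted function $g=f-\lll\nabla f(\zstar),\cdot\rrr_\R$ (minimized at $\zstar$) rather than the paper's three-point argument with $\z=\y-\frac1L(\nabla f(\y)-\nabla f(\x))$, and you explicitly check $1-c\mu\geq 0$ via $\mu\leq L$, which the paper leaves implicit.
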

        
        The final main result is 
        \begin{theorem}[Stationary Convergence of Complex SGD with Adaptive Step Sizes]
        \label{thm:stationary_adaptive_complex}
            Assume the assumption from Section \ref{sec:problem_setup} hold.
            Let $\{\z^{(t)}\}_{t\geq 0}$ be generated by \Cref{eq:sgd_update_complex} with stepsizes satisfying
            \[
                0<\eta_t\leq \frac{1}{L}\qquad (t\geq 0).
            \]
            Then for every $T\geq 1$,
            \[
                \sum_{t=0}^{T-1}\eta_t\,\E\|\nabla F(\z^{(t)})\|^2 \leq 2\big(F(\z^{(0)})-F_\star\big) + L\sigma^2\sum_{t=0}^{T-1}\eta_t^2,
            \]
            where $F_\star:=\inf_{\z\in\C^n}F(\z)>-\infty$.
            Consequently,
            \[
                \min_{0\leq   t\leq   T-1}\E\|\nabla F(\z^{(t)})\|^2 \leq \frac{2(F(\z^{(0)})-F_\star)+L\sigma^2\sum_{t=0}^{T-1}\eta_t^2}{\sum_{t=0}^{T-1}\eta_t}.
            \]
            In particular, if $\eta_t=\frac{1}{L\sqrt{T}}$ for $t=0,\dots,T-1$, then
            \[
                \min_{0\leq   t\leq   T-1}\E\|\nabla F(\z^{(t)})\|^2 \leq \frac{2L(F(\z^{(0)})-F_\star)+\sigma^2}{\sqrt{T}}.
            \]
        \end{theorem}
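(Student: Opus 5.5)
The plan is to run the standard descent-lemma argument for nonconvex SGD. The complex gradient only enters through the real inner product $\lll\cdot,\cdot\rrr_\R$, so the argument should go through almost verbatim.

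\textbf{Step 1 (one-step descent).} Apply \Cref{assum:L_lip_complex} to $F$ (as noted there, it transfers from the components by taking expectations) with $\z=\z^{(t)}$ and $\w=\z^{(t+1)}=\z^{(t)}-\eta_t g_t$, where $g_t\Def\nabla f(\z^{(t)};\omega_t)$. This gives
\[
F(\z^{(t+1)})\leq F(\z^{(t)})-\eta_t\lll \nabla F(\z^{(t)}),g_t\rrr_\R+\frac{L\eta_t^2}{2}\|g_t\|^2 .
\]

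\textbf{Step 2 (conditional expectation).} Condition on $\z^{(t)}$, which is independent of the fresh sample $\omega_t$. Since $\Re$ and the inner product commute with expectation, \Cref{assum:unbiased_complex} gives $\E_t\lll\nabla F(\z^{(t)}),g_t\rrr_\R=\|\nabla F(\z^{(t)})\|^2$. For the second moment, expand $\|g_t\|^2=\|g_t-\nabla F\|^2+2\lll g_t-\nabla F,\nabla F\rrr_\R+\|\nabla F\|^2$. The cross term has zero conditional mean by unbiasedness, so \Cref{assum:bounded_var_complex} yields $\E_t\|g_t\|^2\leq \s^2+\|\nabla F(\z^{(t)})\|^2$. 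Hence
\[
\E_t F(\z^{(t+1)})\leq F(\z^{(t)})-\eta_t\Bigl(1-\frac{L\eta_t}{2}\Bigr)\|\nabla F(\z^{(t)})\|^2+\frac{L\s^2\eta_t^2}{2}.
\]
Because $\eta_t\leq 1/L$, we have $1-L\eta_t/2\geq 1/2$, so the middle coefficient is at most $-\eta_t/2$.

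\textbf{Step 3 (telescoping).} Take total expectation, multiply by $2$, and sum over $t=0,\dots,T-1$. The function values telescope to $2(F(\z^{(0)})-\E F(\z^{(T)}))$. By \Cref{assum:bounded_below_complex}, $\E F(\z^{(T)})\geq F_\star>-\infty$, which gives the first inequality. The second inequality follows by bounding the weighted sum from below by $\min_t\E\|\nabla F(\z^{(t)})\|^2\sum_t\eta_t$. For the third, substitute $\eta_t=1/(L\sqrt T)$: then $\sum\eta_t=\sqrt T/L$ and $\sum\eta_t^2=1/L^2$, and the quotient simplifies to $(2L(F(\z^{(0)})-F_\star)+\s^2)/\sqrt T$.

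\textbf{Main obstacle.} The only delicate point is Step 2, namely verifying that the real-inner-product structure behaves correctly. Concretely, we need $\E\Re\lll a,X\rrr=\Re\lll a,\E X\rrr$ for a fixed vector $a$, and that $\|\cdot\|^2$ is the norm induced by $\lll\cdot,\cdot\rrr_\R$. Both hold because $\lll\cdot,\cdot\rrr_\R$ is the Euclidean inner product on $\R^{2n}$. No convexity (\Cref{assum:convex_complex}, \Cref{assum:mu_convex_complex}) or stationarity assumption is actually needed.
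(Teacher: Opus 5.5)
Your proposal is correct and follows essentially the same argument as the paper. Both apply \Cref{assum:L_lip_complex} to $F$ along the step, take the conditional expectation using unbiasedness and the second-moment decomposition with bounded variance, use $\eta_t\leq 1/L$ to get the factor $1/2$, and telescope against $F_\star$. Your side remark that the convexity and stationarity assumptions are never used is also accurate.
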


\section{Directional Bias}
\label{sec:directional_bias}
    Recent work has been done outlining the directional bias of SGD and GD for general linear regression problems in both the small learning rate regime and moderate and annealing learning rate regime \cite{dir_bias_sgd}. Other attempts at producing directional bias results, particularly in the kernel regression setting, have been made \cite{luo2022directional}. However, these results utilize the ratio
    \[
        \frac{\E\|K(\z^{(t)}-\zstar)\|^2}{\E\|\z^{(t)}-\zstar\|^2},
    \]
    to make conclusions about the direction of convergence of the iterates $\z^{(t)}$; moreover, these results require some (slightly) restrictive assumptions about the Gram matrix $K$.
    
    We offer another characterization of the directional bias of SGD in linear regression in the small learning rate regime that extends the findings of \cite{dir_bias_sgd} to the complex setting with the directional bias characterized by $\lll \z^{(t)}-\zstar,\bm{v}_\ell\rrr$, where $\bm{v}_\ell$ is the $\ell$-th right singular direction of $K$.
    
    \subsection{Main Directional Bias Result}
        Another stochastic iterative method, the Random Kaczmarz method (RK) and its variations, designed to solve large-scale linear systems has gained traction in recent years. The iterates produced take the form
        \[
            \z^{(t+1)}=\z^{(t)}+\eta \|\a_i\|^{-2}(b_i-\a_i \z^{(t)})\a_i^T,
        \]
        in the hopes to minimize the objective function
        \[
            F(\z) = \frac{1}{2}\|A\z-\b\|^2,
        \]
        where $A\in \R^{m\times n}$ for $m\geq n$ and $\a_i$ is the $i$-th row of $A$. The exponential (expected) convergence of the iterates with $\eta=1$ when the system $Ax=b$ is consistent and $A$ has full rank was first proven in \cite{RK}. Given that applying SGD to the objective function $F$ with By defining the sample gradients as
        \[
            f_i(\z) = \frac{n}{2}(\a_i \z - b_i)^2,
        \]
        so that $F(\z)=\E f_i(\z)$ when $i\sim\unif(n)$, it is clear the iterates of SGD take the form
        \[
            \z^{(t+1)}=\z^{(t)}-\eta n (b_i-\a_i \z^{(t)})\a_i^T.
        \]
        Perhaps unsurprisingly given the similarity of the update procedures for both algorithms, it has been shown that the see that RK is equivalent to SGD when using a re-weighted distribution to choose $i\sim \mcal D$ \cite{SGD_RK}.
        
        The connection between the RK and SGD in the real case, and the fact that the complex SGD iterates for when $A\in \C^{m\times n}$, i.e.
        \[
            \z^{(t+1)}=\z^{(t)}-\eta n (b_i-\a_i \z^{(t)})\a_i\dl,
        \]
        differs from the classical SGD iterates by a conjugation in the update vector suggests that we may borrow the analysis utilized for the directional bias of RK \cite{RK_small_sing}! Indeed, this is how the following (complex) SGD directional bias result is proved:
        
        \begin{cor}[Directional Bias of SGD in Kernel Regression with Small Learning Rate]
            \label{cor:dir_bias}
            Given system $A\z=\b$ with $A\in \C^{m\times n}$, the iterates of (complex) SGD applied to the least squares objective $F(\z)=\frac{1}{2}\|A\z-\b\|^2$ satisfy the following inequality,
            \[
                \groupabs{\E \lll \z^{(t+1)}-\zstar,\bm{v}_k\rrr -\groupp{1-\frac{\eta n \s_k^2}{m}}^{t+1}\lll \z^{(0)}-\zstar,\bm{v}_k\rrr }\leq   \|\bm{\ve}\|,
            \]
            where $\bm{\ve}=A\zstar-\b$ for $\zstar\in\argmin \frac{1}{2}\|A\z-\b\|^2$. In particular, if the system $A\x=\b$ is consistent, we have
            \[
                \E \lll \z^{(t+1)}-\zstar,\bm{v}_k\rrr =\groupp{1-\frac{\eta n \s_k^2}{m}}^{t+1}\lll \z^{(0)}-\zstar,\bm{v}_k\rrr.
            \]
        \end{cor}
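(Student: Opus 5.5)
The plan is to track the error $\bm{e}^{(t)}\Def \z^{(t)}-\zstar$ and show that its expectation follows an exact linear recursion driven by $I-\frac{\eta n}{m}A\dl A$. Projecting that recursion onto the right singular vector $\bm{v}_k$ then gives the stated identity. I read the sampling as in the surrounding discussion: a row index $i$ is drawn uniformly from $\{1,\dots,m\}$, and the complex SGD step is
\[
\z^{(t+1)}=\z^{(t)}-\eta n\,\a_i\dl(\a_i\z^{(t)}-b_i),
\]
which is \Cref{eq:sgd_update_complex} applied to $f_i(\z)=\frac n2|\a_i\z-b_i|^2$. I would first check this form of the update, using $\nabla f_i = 2\nabla_{\cl{\z}} f_i = n\,\a_i\dl(\a_i\z-b_i)$. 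The sign convention written in the text is just this rearranged.

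\textbf{Step 1 (error recursion).} Write $\a_i\z^{(t)}-b_i=\a_i\bm{e}^{(t)}+\varepsilon_i$, where $\bm{\ve}=A\zstar-\b$. This gives
\[
\bm{e}^{(t+1)}=\bigl(I-\eta n\,\a_i\dl\a_i\bigr)\bm{e}^{(t)}-\eta n\,\varepsilon_i\,\a_i\dl .
\]

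\textbf{Step 2 (conditional expectation).} Condition on $\z^{(t)}$ and average over $i$, using $\frac1m\sum_i \a_i\dl\a_i=\frac1m A\dl A$ and $\frac1m\sum_i\varepsilon_i\a_i\dl=\frac1m A\dl\bm{\ve}$. This yields
\[
\E[\bm{e}^{(t+1)}\mid \z^{(t)}]=\Bigl(I-\tfrac{\eta n}{m}A\dl A\Bigr)\bm{e}^{(t)}-\tfrac{\eta n}{m}A\dl\bm{\ve}.
\]
The key observation is that $\zstar$ minimizes $\frac12\|A\z-\b\|^2$. By \Cref{assum:stationary_minima_complex} (or directly from the normal equations), $\nabla F(\zstar)=A\dl(A\zstar-\b)=A\dl\bm{\ve}=0$, so the forcing term vanishes. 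Taking total expectations and iterating gives
\[
\E\bm{e}^{(t+1)}=\bigl(I-\tfrac{\eta n}{m}A\dl A\bigr)^{t+1}\bm{e}^{(0)} .
\]

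\textbf{Step 3 (projection onto $\bm{v}_k$).} The inner product $\lll\cdot,\cdot\rrr$ is linear in its first argument, so $\E\lll\bm{e}^{(t+1)},\bm{v}_k\rrr=\lll\E\bm{e}^{(t+1)},\bm{v}_k\rrr$. The matrix $M=I-\frac{\eta n}{m}A\dl A$ is Hermitian with $M\bm{v}_k=(1-\frac{\eta n\s_k^2}{m})\bm{v}_k$, and this eigenvalue is real. Hence
\[
\lll M^{t+1}\bm{e}^{(0)},\bm{v}_k\rrr=\lll \bm{e}^{(0)},M^{t+1}\bm{v}_k\rrr=\Bigl(1-\tfrac{\eta n\s_k^2}{m}\Bigr)^{t+1}\lll\bm{e}^{(0)},\bm{v}_k\rrr .
\]
This gives the equality in the consistent case. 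In the general case it gives the inequality, because the left-hand side of the stated bound is in fact $0\le\|\bm{\ve}\|$.

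\textbf{Main obstacle.} The main point of care is the noise term $A\dl\bm{\ve}$, which arises when $\zstar$ is not an exact solution. If one instead follows the RK analysis of \cite{RK_small_sing}, where $\zstar$ is a reference solution not assumed to satisfy the normal equations, then the forcing term does not vanish. In that case I would bound its accumulated contribution by the geometric sum
\[
\sum_{j\ge0}\Bigl(1-\tfrac{\eta n\s_k^2}{m}\Bigr)^j\tfrac{\eta n}{m}\bigl|\lll A\dl\bm{\ve},\bm{v}_k\rrr\bigr| = \frac{|\lll\bm{\ve},\bm{u}_k\rrr|}{\s_k}.
\]
This relies on $\eta n\s_k^2/m<2$, so that $|1-\frac{\eta n\s_k^2}{m}|<1$ and the series converges, and on $A\bm{v}_k=\s_k\bm{u}_k$. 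Recovering exactly $\|\bm{\ve}\|$ in that setting would need extra normalization, so I would rely on the least-squares choice of $\zstar$, for which the bound is immediate.
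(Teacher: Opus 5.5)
Your proof is correct. In the consistent case it is the paper's computation written at the vector level. The paper projects the one-step conditional expectation directly onto $\bm{v}_k$ and evaluates $\lll A\z^{(t)},A\bm{v}_k\rrr$ by expanding $\z^{(t)}$ in the right singular basis. You instead derive $\E\bm{e}^{(t+1)}=M^{t+1}\bm{e}^{(0)}$ with $M=I-\frac{\eta n}{m}A\dl A$, and then use that $M$ is Hermitian with a real eigenvalue at $\bm{v}_k$.

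The genuine difference is the inconsistent case. The paper does not take $\zstar$ to minimize the objective that SGD is actually run on. It takes $\zstar$ to solve an exact system $A\x=\b$ and runs SGD on the perturbed right-hand side $\b+\bm{\ve}$. A forcing term $\frac{\eta n\s_k}{m}\lll\bm{\ve},\bm{u}_k\rrr$ therefore survives each step and has to be summed geometrically. You take the statement literally ($\zstar\in\argmin\frac12\|A\z-\b\|^2$, $\bm{\ve}=A\zstar-\b$), so the normal equations give $A\dl\bm{\ve}=0$ and the forcing term is identically zero. You get exact equality for every system and every step size, which makes the stated inequality trivial.

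Your route is shorter and proves more for the statement as written. The paper's route answers a different question: the bias measured against a reference solution that is not the least-squares point. However, its last step uses the ratio $1-\frac{\eta n\s_k}{m}$ where the recursion actually gives $1-\frac{\eta n\s_k^2}{m}$. With the correct ratio, the accumulated term is bounded by $|\lll\bm{\ve},\bm{u}_k\rrr|/\s_k$ when $\eta n\s_k^2/m\leq 1$, not by $\|\bm{\ve}\|$, exactly as your closing remark anticipates.

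One small correction to that remark: for $1<\eta n\s_k^2/m<2$ the ratio is negative and the partial sums overshoot the series value. In that range the accumulated term is only bounded by $2|\lll\bm{\ve},\bm{u}_k\rrr|/\s_k$.
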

        For consistent systems in particular, we observe that the iterates converge to the optimum in the direction of the smallest singular value. This is indeed the same conclusion as in \cite{RK_small_sing} for RK and in \cite{dir_bias_sgd} for (real) SGD with small step size.

\section{Applications and Empirical Results}
\label{sec:applications_empirical}
    Before presenting the numerical experiments, we emphasize that the complex setting is not just a cosmetic extension of the real-valued algorithm. In the examples below, both the coefficient vector $\balpha_*$ and the data vector $\bw$ are genuinely complex (see \Cref{sec:NE_blaschke}). Thus, the stochastic iteration must be carried out directly in the complex space $\mathbb C^{n}$. This is precisely where the Wirtinger derivative is important: it yields the correct gradient update for a real-valued loss function defined for complex variables.
    
    A purely real formulation would require splitting the problem into real and imaginary parts. Although this is possible in principle, it hides the natural structure of the Hardy-space recovery problem. By contrast, the complex-SGD method treats the real and imaginary parts together, in a coupled way, and respects the geometry of the complex RKHS. For this reason, the experiment below is a genuine test of the complex-SSGD method, rather than a real algorithm rewritten in complex notation.
    
    \subsection{Fock space and Superoscillations}
    \label{sec:fock-superosc}
        A classical principle in Fourier analysis asserts that the highest frequency present in a wave is determined by the largest frequency appearing in its Fourier decomposition. Superoscillations are a surprising exception to this usual idea: a function may exhibit local oscillations at a frequency exceeding that of each of its individual Fourier components. This phenomenon was first studied in \cite{aharonov2017mathematics} in connection with weak values in quantum mechanics. Since then, applications in signal processing, wave propagation, and quantum theory have been found. This was also later extended to generalized Fock spaces based on fractional derivatives \cite{alpay2026fractional}.
        
        The superoscillation sequence of functions is defined as
        \begin{equation}
            \label{eq:Fn-superosc}
            F_n(a,z):=\left(\cos\frac{z}{n}+ia\sin\frac{z}{n}\right)^n, \qquad n\in\mathbb N,\quad a>1,\quad z\in\mathbb C.
        \end{equation}
        Restricting to the real axis recovers the usual real-variable superoscillatory sequence.
        
        For each fixed $z\in\mathbb C$, the supershift property is given by 
        \[
            \lim_{n\to\infty}F_n(a,z)=e^{iaz}.
        \]
        When $a=1$, the expression reduces identically to $e^{iz}$ for every $n$. When $a>1$, the limit has frequency $a$, whereas each Fourier coefficient appearing in $F_n(a,\cdot)$ has frequency bounded in magnitude by $1$. This is the superoscillatory, or supershift, behavior (see Proposition~\ref{prop:superosc} and Remark~\ref{rmk:so}).
        
        The superoscillatory sequence may be naturally interpreted in the Bargmann-Fock space, as we will see soon. Therefore, we recall the following definition.
        
        \begin{definition}[Fock space]\label{def:fock}
            The Bargmann-Fock space $\mathcal F$ is the reproducing kernel Hilbert space of entire functions $f$ such that
            \begin{equation}
                \label{eq:fock-space-def}
                \mathcal F = \left\{f\in\mathcal H(\mathbb C):\frac{1}{\pi}\iint_{\mathbb C}|f(z)|^2 e^{-|z|^2}\,dA(z)<\infty
                \right\},
            \end{equation}
            with the associated reproducing kernel
            \begin{equation}
                \label{eq:fock-kernel}
                B(z,w)=e^{z\overline w}, \qquad z,w\in\mathbb C.
            \end{equation}
            Here $dA(z)=dx\,dy$ denotes planar Lebesgue measure for $z=x+iy$.
        \end{definition}
        
        Moreover, up to a positive multiplicative constant, it is the unique Hilbert space of entire functions for which
        \begin{equation}
            \label{eq:fock-adjoint}
            \partial_z^{*}=M_z,
        \end{equation}
        where $\partial_z=\frac{d}{dz}$ and $M_z$ denotes multiplication by $z$.
        
        The connection between superoscillations and the Fock space was established in \cite{alpay2023superoscillations}:
        
        \begin{proposition}[{\cite[Proposition 4.1]{alpay2023superoscillations}}]
            \label{prop:superosc}
            For every $n\in\mathbb N$ and $a>1$,
            \begin{equation}
                \label{eq:superosc-kernel-rep}
                F_n(a,z)
                =
                \sum_{j=0}^n C_j(n,a)e^{iz(1-2j/n)}
                =
                \sum_{j=0}^n C_j(n,a)\,B(z,z_j),
            \end{equation}
            where
            \begin{equation}
                \label{eq:cjzj}
                C_j(n,a)
                =
                \binom{n}{j}
                \left(\frac{1+a}{2}\right)^{n-j}
                \left(\frac{1-a}{2}\right)^j,
                \qquad
                z_j=-i\left(1-\frac{2j}{n}\right),
                \quad j=0,\dots,n.
            \end{equation}
        \end{proposition}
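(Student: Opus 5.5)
The plan is to expand $\cos\frac{z}{n}$ and $\sin\frac{z}{n}$ in exponentials and then apply the binomial theorem. Write $\cos\frac{z}{n}=\frac{1}{2}\bigl(e^{iz/n}+e^{-iz/n}\bigr)$ and $i\sin\frac{z}{n}=\frac{1}{2}\bigl(e^{iz/n}-e^{-iz/n}\bigr)$. Then
\[
\cos\frac{z}{n}+ia\sin\frac{z}{n}=\frac{1+a}{2}\,e^{iz/n}+\frac{1-a}{2}\,e^{-iz/n}.
\]
This step is purely algebraic and holds for every $z\in\mathbb C$, since these identities are identities of entire functions.

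Next, raise this expression to the $n$-th power and expand binomially, taking $j$ to be the number of factors equal to $\frac{1-a}{2}e^{-iz/n}$. This gives
\[
F_n(a,z)=\sum_{j=0}^n\binom{n}{j}\Bigl(\frac{1+a}{2}\Bigr)^{n-j}\Bigl(\frac{1-a}{2}\Bigr)^j e^{iz(n-j)/n}e^{-izj/n}.
\]
The exponent combines to $iz\,\frac{n-2j}{n}=iz(1-2j/n)$. The coefficients are exactly $C_j(n,a)$ as defined in \Cref{eq:cjzj}, which proves the first equality in \Cref{eq:superosc-kernel-rep}.

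For the second equality, match each exponential with a Fock kernel section using \Cref{eq:fock-kernel}. We need $B(z,z_j)=e^{z\overline{z_j}}$ to equal $e^{iz(1-2j/n)}$, that is, $\overline{z_j}=i(1-2j/n)$. Since $1-2j/n$ is real, this means $z_j=-i(1-2j/n)$, which is precisely the choice in \Cref{eq:cjzj}. Substituting termwise gives $F_n(a,z)=\sum_{j}C_j(n,a)B(z,z_j)$.

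There is no real obstacle here. The only point requiring care is the conjugation convention in $B(z,w)=e^{z\overline w}$: the sign of $z_j$ is forced by conjugating the purely imaginary node, and an error there would give $+i$ instead of $-i$.
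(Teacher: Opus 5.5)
Your proof is correct and is the standard argument for this identity; the paper does not reprove it and only cites it. You rewrite $\cos\frac{z}{n}+ia\sin\frac{z}{n}=\frac{1+a}{2}e^{iz/n}+\frac{1-a}{2}e^{-iz/n}$, expand binomially, and recognize $e^{iz(1-2j/n)}=e^{z\overline{z_j}}=B(z,z_j)$.
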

        
        Moreover, the representation of superoscillations in terms of the Fock space kernel, and its possible usefulness, was first pointed out in \cite[Remark 4.2]{alpay2023superoscillations}. 
        
        \begin{remark}
            \label{rmk:so}
            The Fourier component has the form $e^{i\omega z}$, where $\omega$ is called the frequency.
            The superoscillatory behavior is apparent in the representation \eqref{eq:superosc-kernel-rep}: each Fourier component of $F_n(a,\cdot)$ has frequency $1-\frac{2j}{n}$, which is bounded by $1$ in magnitude, whereas the limit $e^{iaz}$ has frequency $a$, which may be arbitrarily large.
        \end{remark}
        
        It was then studied more explicitly in \cite{AlpayDeMartinoDiki2025ComplexRepresenter}, where the authors identified an optimization problem for which the superoscillatory sequence arises as a solution:
        
        \begin{theorem}
        [{\cite[Theorem 3.9]{AlpayDeMartinoDiki2025ComplexRepresenter}}]
            \label{thm:superoscillation-minimizer}
            For a fixed $a>1$ Let $\{(z_k,w_k)\}_{k=0}^{n}$ be set of complex data points given by
            \[
                z_k=-i\left(1-\frac{2k}{n}\right),
            \]
            and
            \begin{equation}
                \label{eq:sup_wk}
                w_k(n,a)=\left(\frac{1+a}{2}\right)^n\left[ e^{1-\frac{2k}{n}}\left(1+\frac{1-a}{1+a}e^{-\frac{2}{n}+\frac{4k}{n^2}}\right)^n+\lambda \binom{n}{k}\left(\frac{1-a}{1+a}\right)^k\right].
            \end{equation}
            For $\lambda>0$, the superoscillation sequence $F_n(a,\cdot)$ minimizes the regularized least-squares objective $F$:
            \[
                F_n(a,\cdot)=\arg\min_{f\in\mathcal F} F(f),
            \]
            where $F$ is the kernel least-squares regression function defined by \Cref{eq:kernel-LS-functional}.
        \end{theorem}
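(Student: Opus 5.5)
The plan is to verify the first-order optimality condition directly for the candidate $f=F_n(a,\cdot)$, using the kernel representation of \Cref{prop:superosc}, and then to conclude by strict convexity. Since $\lambda>0$, the functional $F$ in \Cref{eq:kernel-LS-functional} is strictly convex on $\mathcal F$. Concretely, for any $f,g\in\mathcal F$,
\[
F(f+g)-F(f)= 2\,\Re\Big(-\sum_{k=0}^n (w_k-f(z_k))\overline{g(z_k)}+\lambda\langle f,g\rangle_{\mathcal F}\Big)+\sum_{k=0}^n|g(z_k)|^2+\lambda\|g\|_{\mathcal F}^2 .
\]
Hence $f$ is the unique minimizer as soon as the real-linear term vanishes for every $g$. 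By the reproducing property $\overline{g(z_k)}=\langle B(\cdot,z_k),g\rangle_{\mathcal F}$, so this term vanishes for every $g$ exactly when
\[
\lambda f=\sum_{k=0}^n (w_k-f(z_k))B(\cdot,z_k).
\]

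By \Cref{prop:superosc}, $F_n(a,\cdot)=\sum_{j=0}^n C_j(n,a)B(\cdot,z_j)$, with the same nodes $z_j$ as the data points. The displayed condition therefore reduces to the coefficientwise identity $w_k-F_n(a,z_k)=\lambda C_k(n,a)$, i.e.\ $w=(K+\lambda I)\mathbf C$ with $K=(B(z_k,z_j))_{k,j}$. Alternatively one can pass through the representer theorem and \Cref{eq:Falpha-kernel}: the gradient in $\balpha$ is $K\big((K+\lambda I)\balpha-w\big)$, and $(K+\lambda I)\balpha=w$ is sufficient for it to vanish. Matching coefficients is legitimate even if the $B(\cdot,z_j)$ were dependent, since we only need sufficiency.

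The main step, and essentially the only computation, is evaluating $F_n(a,z_k)$. Since $z_k\overline{z_j}=(-i)(i)(1-\tfrac{2k}{n})(1-\tfrac{2j}{n})$, we have
\[
B(z_k,z_j)=e^{(1-2k/n)(1-2j/n)}=e^{1-2k/n}\,\big(e^{-2/n+4k/n^2}\big)^{j}.
\]
Summing against $C_j(n,a)$ and applying the binomial theorem gives
\[
F_n(a,z_k)=\Big(\tfrac{1+a}{2}\Big)^n e^{1-2k/n}\Big(1+\tfrac{1-a}{1+a}e^{-2/n+4k/n^2}\Big)^n .
\]
This is exactly the first term of \Cref{eq:sup_wk}. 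The second term there is
\[
\lambda\Big(\tfrac{1+a}{2}\Big)^n\binom nk\Big(\tfrac{1-a}{1+a}\Big)^k=\lambda C_k(n,a),
\]
as required.

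The only point needing care is bookkeeping of the conjugations in $B(z_k,z_j)$ and in the variational identity, specifically which slot is conjugated. The factoring of $e^{1-2k/n}$ in the binomial sum is where I expect a sign slip to be most likely, so I would check it against the case $a=1$, where $F_n(1,z)=e^{iz}$ and $e^{iz_k}=e^{1-2k/n}$.
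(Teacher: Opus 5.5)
Your proof is correct and follows the route the paper's own discussion relies on (the statement is only cited here). You reduce optimality to $\bw=(K+\lambda I)\mathbf{C}$, where the paper would invoke the representer theorem and \Cref{prop:K_plus_lamba_I}; your direct expansion of $F(f+g)-F(f)$ gets this without either citation and also gives uniqueness. You then sum the binomial series using $B(z_k,z_j)=e^{(1-2k/n)(1-2j/n)}$.

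One caution: the result depends on the labeled \Cref{eq:kernel-LS-functional} having no factor $\tfrac12$ on the data term. With the $\tfrac12$ that appears in \Cref{eq:Falpha-kernel} (and in the appendix restatement), the stationarity condition becomes $\bw=(K+2\lambda I)\balpha$. So your side remark that the gradient of \Cref{eq:Falpha-kernel} is $K\bigl((K+\lambda I)\balpha-\bw\bigr)$ is off by that factor, although your main argument never uses it.
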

        
        \subsubsection{Fock Space Numerical Results}
        \label{sec:NE_suposcillation}
            The purpose of this numerical experiment is twofold: first, we confirm that the complex-SGD method based on the Wirtinger derivative correctly approximates the coefficient vector predicted by the \Cref{thm:superoscillation-minimizer}, and second, we illustrate the superoscillatory behavior of the resulting recovered function on the real axis.
            
            For the numerical experiment, we will use the data points $\{z_k,w_k\}_{i=0}^n$ defined in \Cref{thm:superoscillation-minimizer} using parameters
            \[
                n=40,\qquad a=2,\qquad \lambda=1.
            \]
            By equation \eqref{eq:superosc-kernel-rep}, the superoscillation sequence
            has the kernel representation
            \[
                F_n(a,z)=\left(\cos\frac{z}{n}+ia\sin\frac{z}{n}\right)^n =\sum_{j=0}^{n} C_j(n,a)\,B(z,z_j),
            \]
            where $C_j$ and $z_j$ are given by \eqref{eq:cjzj}. Hence Theorem~\ref{thm:superoscillation-minimizer} and the representer theorem state that the coefficient vector of the kernel representation of the minimizer is 
            \begin{equation}
                \label{eq:balpha}
                \balpha_*=(C_0(n,a),\dots,C_n(n,a))^T.
            \end{equation}
            To generate \Cref{fig:cv_rate}, we first build the data vector
            \[
                \bw=(w_0,\dots,w_n)^T
            \]
            from \Cref{thm:superoscillation-minimizer}, and then we generate a sample of complex-SGD iterates for the least-squares objective 
            \[
                \text{minimize}\quad \frac{1}{2}\|(K+\lambda I)\balpha - \w\|^2,
            \]
            suggested by \Cref{prop:K_plus_lamba_I} using a normalized update rule similar to \Cref{eq:sgd_update_complex} with gradient given in \Cref{lem:comp_sgd_update_LS}. More specifically, the updates are given by 
            \begin{equation}
                \label{eq:alpha_step}
                \balpha^{(t+1)}=\balpha^{(t)}-\eta_t\,\frac{\a_{i_t}\balpha^{(t)}-w_{i_t}}{a_{i_t}\|_2^2}\,\a_{k_t}\dl,
            \end{equation}
            where $\a_{i_t}$ is the $i_t$-th row of $K+\lambda I$ and $i_t\sim \unif([n+1])$. This change in the update rule from \Cref{eq:sgd_update_complex} is purely to improve numerical stability, and the resulting figures (\Cref{fig:cv_rate} and \Cref{fig:exact_vs_approx}) are virtually identical to those with the un-normalized update rule. Finally, we plot the relative residual and approximation error at each iteration. 
            
            \Cref{fig:exact_vs_approx} displays coefficients of $F_{40}(2,x)$, i.e. $\balpha_\ast$, alongside the coefficients from the final iterate $\balpha^{(100000)}$ from \Cref{fig:cv_rate}. 
            
            \begin{figure}[ht]
                \centering
                \begin{subfigure}[t]{0.48\linewidth}
                    \centering
                        \begin{tikzpicture}
                            \begin{axis}[
                            width=0.95\textwidth,
                            height=0.75\textwidth,
                            xlabel={Iteration},
                            ylabel={Error},
                            ymode=log,
                            legend pos=north east,
                            title={Fock: Complex-SGD Error}
                            ]
                                \addplot[thick, blue] table[col sep=comma, x=iteration, y=relative_residual] {fock/plot1_convergence.csv};
                                \addlegendentry{\footnotesize{Relative Residual Error}}
                                
                                \addplot[thick, red] table[col sep=comma, x=iteration, y=relative_coefficient_error] {fock/plot1_convergence.csv};
                                \addlegendentry{\footnotesize{Relative Approximation Error}}
                            \end{axis}
                        \end{tikzpicture}
                    \caption{Relative residual $\|(K+\lambda I)\balpha^{(t)}-\bw\|_2/\|\bw\|_2$ and relative coefficient error $\|\balpha^{(t)}-\balpha_*\|_2/\|\balpha_*\|_2$ versus iteration.}
                    \label{fig:cv_rate}
                \end{subfigure}
                \hfill
                \begin{subfigure}[t]{0.48\linewidth}
                    \centering
                        \begin{tikzpicture}
                            \begin{axis}[
                            width=0.95\textwidth,
                            height=0.75\textwidth,
                            xlabel={Index $j$},
                            ylabel={Coefficient value},
                            ytick={-1.5e11,0,1.5e11},
                            scaled y ticks = false,
                            legend pos=north east,
                            title={Fock: Exact vs. Recovered Coefficients}
                            ]
                            
                                \addplot[only marks, mark=square*, blue, mark options={fill=blue}] 
                                table[col sep=comma, x=j, y=exact_real] {fock/plot2_coefficients.csv};
                                \addlegendentry{\footnotesize{Exact Coefficient}}
                                
                                \addplot[thick, only marks, red, mark=x] 
                                table[col sep=comma, x=j, y=recovered_real] {fock/plot2_coefficients.csv};
                                \addlegendentry{\footnotesize{Recovered Coefficient}}
                            \end{axis}
                        \end{tikzpicture}
                    \caption{Comparison of the entries of $\balpha^{(10000)}$ and $\balpha_\ast$.}
                    \label{fig:exact_vs_approx}
                \end{subfigure}
                \caption{Numerical verification of the complex SGD recovery in the Bargmann--Fock setting. Panel (a) shows convergence at the coefficient level, while panel (b) compares the entries of $\balpha^{(100000)}$ and $\balpha_\ast$.}
                \label{fig:sgd_two_panels}
            \end{figure}
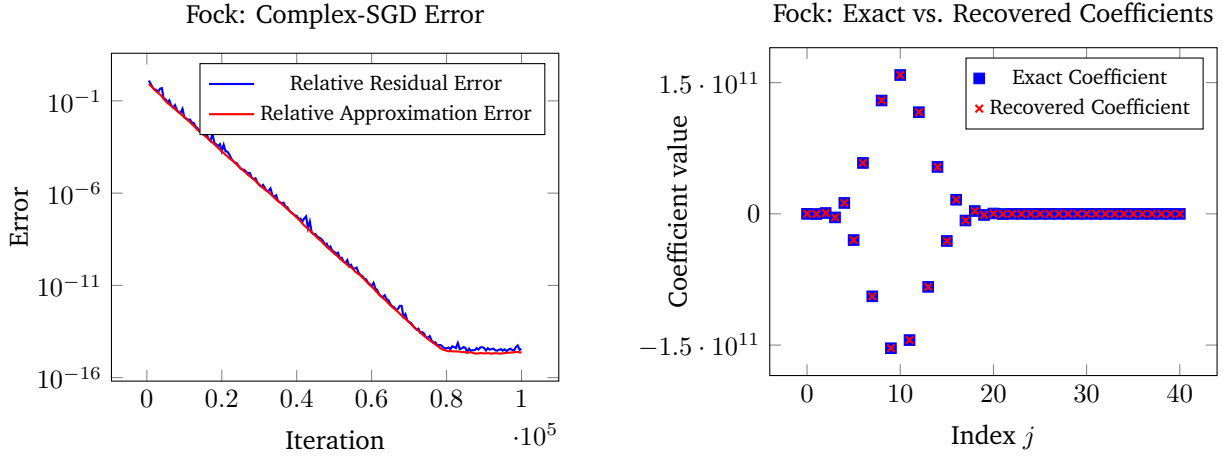
            
            Since the recovered coefficients in \Cref{fig:sgd_two_panels} match nearly perfectly, in \Cref{fig:reconstructed_superoscillation_vs_limit} we compare the limiting function $e^{2ix}$, the approximated reconstructed function $F^{{\rm{approx}}}= K\balpha^{(100000)}$, and $F_{40}(2,x)$, which has a closed form given by \Cref{eq:Fn-superosc}. This simultaneously illustrates the superoscillatory phenomenon and the quality of the reconstruction $F^{{\rm{approx}}}$ compared to $F_{40}(2,x)$. Indeed, the superoscillatory phenomenon (\Cref{rmk:so}) is present because $F_{40}(2,x)$ (and consequently $F^{\rm approx}$) is constructed as a linear combination of Fourier components whose frequencies are all bounded by $1$, whereas the limiting function $e^{2ix}$ has frequency $2$. Succinctly, the approximating functions are formed from bounded-frequency components, but their limit exhibits a higher frequency. \Cref{fig:reconstructed_superoscillation_vs_limit} also shows that $F^{{\rm{approx}}}$ and $F_{40}(2,x)$ agree almost perfectly on the presented domain, demonstrating complex-SGD's ability to provide a reliable reconstruction at the function level as well. 
            
            We observe the approximation of $e^{2ix}$ given by $F^{{\rm{approx}}}$ is strongest near the origin and worsens as $|x|$ increases. Given the quality of the approximation $F^{{\rm{approx}}}$ to $F_{40}(2,x)$, this is a result of the general fact that $F_n(a,x)$ converges locally uniformly, but not uniformly, to $e^{iax}$. 
            
            \begin{figure}[ht]
                \centering
                \begin{subfigure}[t]{0.48\linewidth}
                    \centering
                        \begin{tikzpicture}
                            \begin{axis}[
                            width=0.95\textwidth,
                            height=0.75\textwidth,
                            xlabel={$x$},
                            ylabel={Real Part},
                            legend pos=north east,
                            title={Fock: Real Part Comparison}
                            ]
                                
                                \addplot[thick, blue] 
                                table[col sep=comma, x=x, y=closed_real] 
                                {fock/plot3_real_part.csv};
                                \addlegendentry{\footnotesize{Closed Form}}
                                
                                \addplot[thick, red, dashed] table[col sep=comma, x=x, y=sgd_real] 
                                {fock/plot3_real_part.csv};
                                \addlegendentry{\footnotesize{SGD}}
                                
                                \addplot[thick, dotted] table[col sep=comma, x=x, y=limit_real] 
                                {fock/plot3_real_part.csv};
                                \addlegendentry{\footnotesize{Limit}}
                            \end{axis}
                        \end{tikzpicture}
                    \caption{Real part.}
                    \label{fig:reconstructed_real}
                \end{subfigure}
                \hfill
                \begin{subfigure}[t]{0.48\linewidth}
                    \centering
                        \begin{tikzpicture}
                            \begin{axis}[
                            width=0.95\textwidth,
                            height=0.75\textwidth,
                            xlabel={$x$},
                            ylabel={Imaginary Part},
                            legend pos=north east,
                            title={Fock: Imaginary Part Comparison}
                            ]
                                
                                \addplot[thick, blue] 
                                table[col sep=comma, x=x, y=closed_imag] 
                                {fock/plot4_imag_part.csv};
                                \addlegendentry{\footnotesize{Closed Form}}
                                
                                \addplot[thick, red, dashed] table[col sep=comma, x=x, y=sgd_imag] 
                                {fock/plot4_imag_part.csv};
                                \addlegendentry{\footnotesize{SGD}}
                                
                                \addplot[thick, dotted] table[col sep=comma, x=x, y=limit_imag] 
                                {fock/plot4_imag_part.csv};
                                \addlegendentry{\footnotesize{Limit}}
                            \end{axis}
                        \end{tikzpicture}
                    \caption{Imaginary part.}
                    \label{fig:reconstructed_imag}
                \end{subfigure}
                \caption{Comparison of the real and imaginary parts of $F_{n}(a,x)$, $F^{{\rm{approx}}}$, and $e^{iax}$ with parameters $n=40$ and $a=2$. We note that in both the real and imaginary parts, the reconstructed function $F^{{\rm{approx}}}$ is visually indistinguishable from the closed-form expression $F_{40}(2,x)$.}
                \label{fig:reconstructed_superoscillation_vs_limit}
            \end{figure}
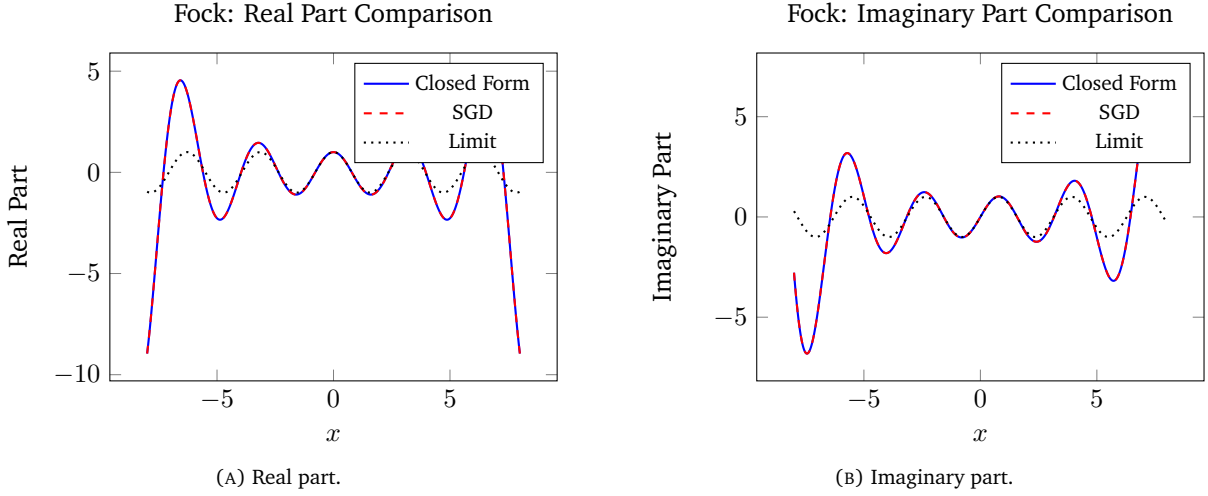
            
            These figures reasonably reinforce the results of Theorem~\ref{thm:superoscillation-minimizer}, and hence provide empirical validation of complex-SGD method in the Fock space setting. Indeed, the relative residual and approximation error both decrease to nearly machine precision, and the function reconstructed from the generated coefficients agrees closely with the exact superoscillatory sequence. 
            
            \begin{remark}[Gaussian RBF analogue of the Fock space]\label{rmk:rbf-numerics}
                We include the Gaussian RBF version of the supershift property for readers with a background in machine learning. Indeed, the Gaussian RBF kernel is standard in machine learning and is often used in methods such as support vector machines and Gaussian process models; see, for example, \cite{ScholkopfSmola2002,HofmannScholkopfSmola2008,RasmussenWilliams2006}. Its length-scale parameter determines how strongly nearby sample points interact. Even though the resulting finite-dimensional recovery problem is algebraically the same as in the Fock-space case.
                
                For $\gamma>0$, the Gaussian RBF kernel is
                \[
                    K_\gamma(z,w)=\exp\!\left(-\frac{(z-\overline w)^2}{\gamma^2}\right),\qquad z,w\in\mathbb C,
                \]
                and its reproducing kernel Hilbert space is
                \[
                    \mathcal H_\gamma^{RBF}(\mathbb C)=\left\{ f\in\mathcal H(\mathbb C): \left(\frac{2}{\pi\gamma^2}\right) \iint_{\mathbb C} |f(z)|^2 \exp\!\left(\frac{(z-\overline z)^2}{\gamma^2}\right)\,dA(z)<\infty\right\}.
                \]
                In this paper we take $\gamma=\sqrt2$. Then
                \[
                    K_{\sqrt2}(z,w)=e^{-(z^2+\overline w^{\,2})/2}B(z,w),
                \]
                so this Gaussian RBF setting can be viewed as a weighted version of the Fock-space setting.
                
                The first-type RBF supershift is
                \[
                    R_n(z,a)=\sum_{j=0}^{n} C_j(n,a)\,e^{\overline z_j^{\,2}/2}\,K_{\sqrt2}(z,z_j),
                \]
                where $z_j$ and $C_j(n,a)$ are given by \eqref{eq:cjzj}. Using the property 
                \[
                    F_n(z,a) = e^{z^2/2}R_n(z,a),
                \]
                it has been shown that the minimizer $R_n(z,a)$ of the regularized-least-squares objective in \Cref{thm:superoscillation-minimizer} with the weights $\w$ given by
                \[
                    w_k
                    =
                    e^{\frac12\left(1-\frac{2k}{n}\right)^2}
                    \sum_{j=0}^{n}
                    C_j(n,a)\,
                    e^{-\left(1-\frac{2k}{n}\right)\left(1-\frac{2j}{n}\right)}
                    +
                    \lambda C_k(n,a)\,
                    e^{-\frac12\left(1-\frac{2k}{n}\right)^2},
                \]
                has coefficients
                \[
                    \bbeta_*=(\beta_0,\dots,\beta_n)^T,
                    \qquad
                    \beta_j=C_j(n,a)e^{\overline z_j^{\,2}/2},
                \]
                in its kernel representation (see \cite[Theorem 3.20]{AlpayDeMartinoDiki2025ComplexRepresenter}).
                
                Then, just as in the Fock space example, the coefficient recovery problem reduces to the finite-dimensional complex linear system
                \[
                    \bw=(K+\lambda I)\bbeta,\qquad K_{k,j}=K_{\sqrt2}(z_k,z_j).
                \]
                with solution $\beta_\ast$ by \Cref{prop:K_plus_lamba_I}.
                
                Because of how closely related this Gaussian RBF example is to the Fock space example, the corresponding numerical results look remarkably similar to \Cref{fig:reconstructed_superoscillation_vs_limit}. As such, we omit these plots in particular, but we note that the same complex-SGD recovery method works for the Gaussian RBF kernel as well.
            \end{remark}
    
    \subsection{Hardy Space and Blaschke Products}
    \label{Hardy_sub}
    
        In this section, we present a similar application with the Hardy space over the unit disk in $\C$ being the RKHS of choice. Here, the target minimizer is a finite Blaschke product. We begin by recalling the definition of the Hardy space over the unit disk $\mathbb D$.
        
        \begin{definition}[Hardy space]
            \label{def:hardy}
            The Hardy space $\mathbf H^2(\mathbb D)$ consists of holomorphic functions $f$ on $\mathbb D$ such that
            \[
                \mathbf H^2(\mathbb D)=\left\{ f\in\mathcal{H}(\mathbb{D}):\lim_{r\to 1}\frac{1}{2\pi}\int_0^{2\pi}|f(re^{it})|^2\,dt<\infty\right\}.
            \]
            It is an RKHS with the reproducing kernel
            \[
                K(z,w)=\frac{1}{1-z\overline w} = \sum_{m=0}^{\infty} z^m \overline w^{\,m},\qquad z,w\in\mathbb D.
            \]
        \end{definition}
        
        We refer to \cite{duren1970theory} for more information on the Hardy space. An important tool in the study of $\mathbf H^2(\mathbb D)$ is provided by the Blaschke factors and the associated Blaschke products. These play a fundamental role in invariant subspace theory and interpolation; see \cite{DD,GMW,R} for more. Moreover, a new characterization of the Hardy space can be found in \cite{alpay2023new}.
        
        \begin{definition}
            Let $\{a_j\}_{j=1}^n\subset \mathbb D$. For $z\in\mathbb C$ and each $a_j\in\mathbb D$, the Blaschke factor corresponding to $a_j$ is
            \[
                \frac{z-a_j}{1-\overline{a_j}z}.
            \]
            The corresponding finite Blaschke product is
            \[
                B_n(z)=\prod_{j=1}^n \frac{z-a_j}{1-\overline{a_j}z}.
            \]
        \end{definition}
        
        \begin{remark}
            The collection of all Blaschke factors are the automorphisms of $\mathbb D$ up to a unimodular constant. Moreover, on the unit circle one ($z\in\partial \mathbb{D})$ has $|B_n(z)|=1$, so $B_n$ is one of the simplest $\mathbf H^2$-inner functions. Its zeros are precisely the points $\{a_j\}_{j=1}^n$.
        \end{remark}
        
        The derivative of $B_n$ is given by
        \[
            B_n'(z)=\sum_{l=1}^n\frac{1-|a_l|^2}{(1-\overline{a_l}z)^2}\left( \prod_{\substack{k=1\\k\neq l}}^n\frac{z-a_k}{1-\overline{a_k}z}\right),
        \]
        as seen in \cite{GMW}. In particular,
        \[
            B_n'(a_j)=\frac{1}{1-|a_j|^2}\prod_{\substack{k=1\\k\neq j}}^n\frac{a_j-a_k}{1-\overline{a_k}a_j}.
        \]
        Assuming the zeros of $B_n$ are simple and distinct from the origin, the Blaschke product admits the kernel expansion
        \[
            B_n(z)=c_0+\sum_{j=1}^n c_j K(z,a_j),\qquad c_0=\frac{1}{B_n(0)}, \qquad c_j=\frac{1}{a_j B_n'(a_j)},
        \]
        given in \cite{FL}, where the identity was proved in a more general setting.
        Sometimes this is referred to as the Bedrosian identity (see \cite{CQK}).
        
        To formulate a numerical experiment, we wish to use an optimization problem for which the Blaschke product is the minimizer. We recall the following:
        \begin{theorem}
        [{\cite[Proposition 6.7]{AlpayDeMartinoDiki2025ComplexRepresenter}}]
            \label{Hardy}
            Suppose the data set $\{(a_k,w_k)\}_{k=1}^n$ consists of the simple nonzero zeros $a_1,\dots,a_n\in\mathbb D$ of $B_n$ and the associated values
            \[
                w_k=\lambda \frac{1}{\overline{a_k B_n'(a_k)}}-\frac{1}{\overline{B_n(0)}},\qquad 1\leq   k\leq   n.
            \]
            Then the Blaschke product $B_n$ is the regularized least-squares minimizer
            \[
                B_n = f_\ast=\arg\min_{f\in \mathbf H^2(\mathbb D)} F(f),
            \]
            where $F$ is given by \eqref{eq:kernel-LS-functional}.
        \end{theorem}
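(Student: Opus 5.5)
The plan is to avoid any iterative argument and verify optimality of $B_n$ directly from the first-order (normal) equations of the functional \eqref{eq:kernel-LS-functional}, using strict convexity for uniqueness. First, since $\lambda>0$, the map $f\mapsto \sum_{k=1}^n|w_k-f(a_k)|^2+\lambda\|f\|_{\mathbf H^2}^2$ is a continuous, strictly convex and coercive real-valued functional on the Hilbert space $\mathbf H^2(\mathbb D)$. It therefore has a unique minimizer $f_\ast$, and $f_\ast$ is characterized by the vanishing of the real directional derivative in every direction $h\in\mathbf H^2(\mathbb D)$. Expanding $F(f+th)$ for real $t$ and using the reproducing property $h(a_k)=\langle h,K(\cdot,a_k)\rangle$, this condition reads
\[
\Re\Big\langle h,\ \lambda f-\sum_{k=1}^n\big(w_k-f(a_k)\big)K(\cdot,a_k)\Big\rangle=0\quad\text{for all }h .
\]
Replacing $h$ by $ih$ removes the real part, so the condition is equivalent to the kernel identity
\[
\lambda f=\sum_{k=1}^n\big(w_k-f(a_k)\big)K(\cdot,a_k).
\]
This is exactly the complex representer theorem (and \Cref{prop:K_plus_lamba_I}) in functional form.

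Next I would substitute $f=B_n$. Since the $a_k$ are the zeros of $B_n$, we have $B_n(a_k)=0$, and the right-hand side collapses to $\sum_k w_kK(\cdot,a_k)$. It therefore suffices to prove $\lambda B_n=\sum_k w_kK(\cdot,a_k)$. For this I would invoke the Bedrosian/partial-fraction expansion quoted above,
\[
B_n=c_0+\sum_j c_jK(\cdot,a_j),\qquad c_0=\frac{1}{B_n(0)},\qquad c_j=\frac{1}{a_jB_n'(a_j)}.
\]
Both sides are rational functions with simple poles at the points $1/\overline{a_j}$. The proof then reduces to two matchings: the coefficient of each $K(\cdot,a_j)$, where the term $\lambda/\overline{a_kB_n'(a_k)}$ in $w_k$ should produce $\lambda c_k$ (possibly after conjugation, depending on the convention for $K(z,w)$ versus $K(w,z)$), and the constant term. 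To obtain the residues I would compute at $z\to1/\overline{a_j}$; to obtain the constant term I would compare values at $z=0$, where $K(0,a_k)=1$.

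The main obstacle is the constant term $c_0$. A finite combination $\sum_k\beta_kK(\cdot,a_k)$ tends to $0$ as $z\to\infty$, whereas $B_n(z)\to\prod_j(-1/\overline{a_j})\neq0$. So I must check carefully how the $-1/\overline{B_n(0)}$ part of $w_k$ is meant to absorb $c_0$. My first attempt is to pass to the $n\times n$ Gram system $(K+\lambda I)\balpha=\bw$, writing $\balpha$ for the coefficient vector of the minimizer. I would evaluate it at the nodes $a_k$, using $\sum_jc_jK(a_k,a_j)=-c_0$ (which is the statement $B_n(a_k)=0$). I would then see whether the stated $w_k$ makes $\balpha=(c_j)$ solve the system with the constant contributing exactly $-c_0$ at each node. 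If this matching only works modulo the constant function, I would follow the precise normalization in \cite[Proposition 6.7]{AlpayDeMartinoDiki2025ComplexRepresenter}, for instance a norm on $\mathbf H^2$ modulo constants or the inclusion of the node $0$ among the data. I would then redo the residue bookkeeping with that convention, with uniqueness again following from strict convexity.
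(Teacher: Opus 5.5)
Your normal-equation step is correct. For the functional \eqref{eq:kernel-LS-functional}, in the form without the factor $\tfrac12$ (the one compatible with $K+\lambda I$), the unique minimizer is characterized by $\lambda f=\sum_k\bigl(w_k-f(a_k)\bigr)K(\cdot,a_k)$. Since $B_n(a_k)=0$, the claim reduces to $\lambda B_n=\sum_k w_kK(\cdot,a_k)$.

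The gap is that you stop exactly where this identity has to be verified and defer to an unspecified normalization. In fact the identity is false. The correct expansion coefficients follow from $B_n(1/\overline z)=1/\overline{B_n(z)}$: they are $c_0=B_n(\infty)=1/\overline{B_n(0)}$ and $c_j=1/\overline{a_jB_n'(a_j)}$. The unconjugated values you wrote already fail for $n=1$ with $a_1\notin\R$. So the data are $w_k=\lambda c_k-c_0$, and
\[
\sum_{k=1}^n w_kK(\cdot,a_k)-\lambda B_n=-c_0\Bigl(\lambda+\sum_{k=1}^n K(\cdot,a_k)\Bigr),
\]
which is not identically zero: at $z=0$ it equals $-(n+\lambda)c_0$. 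So the constant-term matching in your plan is precisely the step that breaks.

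Your own observation at $z\to\infty$ gives the structural reason. The normal equation forces every minimizer into $\operatorname{span}\{K(\cdot,a_k)\}$, whose elements vanish at $\infty$ because $a_k\neq0$, while $B_n(\infty)=c_0\neq0$. Hence, with nodes $a_1,\dots,a_n$ and $\lambda>0$, the Blaschke product is never the minimizer, whatever the data. The statement as literally written cannot be proved.

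The paper gives no proof of its own; it only cites the result. Its Hardy-space experiment, however, uses exactly the corrected version: it recovers $c_1,\dots,c_n$ by complex SGD and adds $c_0$ by hand. Your proposed Gram check proves that version. Since $(K\bm{c})_k=B_n(a_k)-c_0=-c_0$, we get $(K+\lambda I)\bm{c}=\bw$, and the minimizer is
\[
f_\ast=\sum_{j=1}^n c_jK(\cdot,a_j)=B_n-\frac{1}{\overline{B_n(0)}},
\]
that is, $B_n=c_0+f_\ast$. With the conclusion amended this way, your argument (normal equation, Gram identity, uniqueness from strict convexity) is complete.

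Neither fallback you mention rescues the literal claim.
\begin{itemize}
\item Adding the node $0$ does put constants into the span, since $K(\cdot,0)\equiv1$. But then the data making $B_n$ optimal are $w_k=\lambda c_k$ at $a_k$ and $w_0=B_n(0)+\lambda c_0$ at $0$, not the stated ones.
\item With a seminorm that ignores constants, testing optimality against $h\equiv1$ forces $\sum_k\bigl(w_k-B_n(a_k)\bigr)=\sum_k w_k=0$. Here, however, $\sum_k w_k=-\bigl(n+\lambda(1-|B_n(0)|^2)\bigr)/\overline{B_n(0)}\neq0$.
\end{itemize}
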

        
        \subsubsection{Hardy space Numerical Experiment}
        \label{sec:NE_blaschke}
            Similarly to \Cref{sec:NE_suposcillation}, we numerically verify Theorem~\ref{Hardy} in a complex Hardy-space example to show complex-SGD based on the Wirtinger gradient performs as desired empirically. Importantly, because the roots, and hence the coefficients, are complex, \textit{a real-valued stochastic gradient method is not sufficient} without modification to the original problem, justifying the use of complex-SGD.
            
            For the following numerical experiments, we first generate fifty random complex roots $\{a_1,\dots, a_{50}\}$ inside the unit disk and fix $\lambda=1$. With this, $n={50}$ and the corresponding Blaschke product is
            \[
                B_{50}(z)=\prod_{j=1}^{50} \frac{z-a_j}{1-\overline{a_j}z}.
            \]
            
            The kernel representation of $B_n$ may be expressed as
            \[
                B_{50}(z)=c_0+\sum_{j=1}^{50} c_j K(z,a_j),
            \]
            where
            \[
                c_0=\frac{1}{\cl{B_{50}(0)}},\qquad c_j=\frac{1}{\cl{a_j B_{50}'(a_j)}}, \qquad j=1,\dots,{50}.
            \]
            Hence, the exact coefficient vector we wish to recover is
            \begin{equation}
                \label{eq:alpha_hardy}
                \balpha_*=(c_1,\dots,c_{50})^T\in \mathbb C^{50}.
            \end{equation}
            The representation contains the additional constant term $c_0$. In the numerical implementation, this constant is computed explicitly from $B_{50}(0)$, while the stochastic method is used to recover the coefficients $c_1,\dots,c_{50}$. The corresponding data points from \Cref{Hardy} are given by
            \[
            w_k = \lambda c_k-c_0.
            \]
            Again, we use \Cref{prop:K_plus_lamba_I} to reduce the objective to minimizing
            \[
                \frac{1}{2}\|(K+\lambda I)\balpha - \w\|^2,
            \]
            to recover $\balpha_\ast$ and produce the complex-SGD iterates in exactly the same way as in \Cref{sec:NE_suposcillation}. 
            
            Figure \ref{fig:hardy_complex_cv_rate} shows the relative residual and approximation error as functions of the iteration number. This gives a direct numerical verification that the complex-SGD iterates converge toward the exact coefficients from Theorem \ref{Hardy}.
            Figure \ref{fig:hardy_complex_coefficients_plane} overlays the exact coefficients and the recovered coefficients as an alternative demonstration of the reconstruction quality. In \Cref{fig:hardy_complex_cv_rate}, both the relative residual and the relative coefficient error decrease steadily to machine precision, demonstrating the linear convergence of the complex SGD. In \Cref{fig:hardy_complex_coefficients_plane}, the reconstructed coefficients are visually indistinguishable from the exact coefficients on the complex plane, confirming that the method accurately recovers the coefficient vector $\balpha_\ast$  in the Blaschke-product expansion give by \eqref{eq:alpha_hardy}.
            
            \begin{figure}[ht]
                \centering
                \begin{subfigure}[t]{0.48\textwidth}
                    \centering
                        \begin{tikzpicture}
                            \begin{axis}[
                            width=0.95\textwidth,
                            height=0.75\textwidth,
                            xlabel={Iteration},
                            ylabel={Error},
                            ymode=log,
                            legend pos=north east,
                            title={Hardy Example: Complex-SGD Error}
                            ]
                            
                                \addplot[thick, blue] table[col sep=comma, x=iteration, y=relative_residual] {hardy/Hardy_plot2_convergence.csv};
                                \addlegendentry{\footnotesize{Relative Residual Error}}
                                
                                \addplot[thick, red] table[col sep=comma, x=iteration, y=relative_coefficient_error] {hardy/Hardy_plot2_convergence.csv};
                                \addlegendentry{\footnotesize{Relative Approximation Error}}
                            \end{axis}
                        \end{tikzpicture}
                    \caption{Relative residual $\|(K+\lambda I)\balpha^{(t)}-\bw\|_2/\|\bw\|_2$ and relative approximation error $\|\balpha^{(t)}-\balpha_*\|_2/\|\balpha_*\|_2$ at each iteration.}
                    \label{fig:hardy_complex_cv_rate}
                \end{subfigure}
                \hfill
                \begin{subfigure}[t]{0.48\textwidth}
                    \centering
                    \begin{tikzpicture}
                            \begin{axis}[
                            width=0.95\textwidth,
                            height=0.75\textwidth,
                            xlabel={Real Part},
                            ylabel={Imaginary Part},
                            legend pos=south west,
                            title={Hardy: Exact vs. Recovered Coefficients}
                            ]
                            
                                \addplot[only marks, mark=square*, blue, mark options={fill=blue}] 
                                table[col sep=comma, x=exact_real, y=exact_imag] {hardy/Hardy_plot3_coefficients_complex_plane.csv};
                                \addlegendentry{\footnotesize{Exact Coefficient}}
                                
                                \addplot[thick, only marks, red, mark=x] 
                                table[col sep=comma, x=recovered_real, y=recovered_imag] {hardy/Hardy_plot3_coefficients_complex_plane.csv};
                                \addlegendentry{\footnotesize{Recovered Coefficient}}
                            \end{axis}
                        \end{tikzpicture}
                    \caption{Comparison of the exact and reconstructed coefficients in the Blaschke-product expansion.}
                    \label{fig:hardy_complex_coefficients_plane}
                \end{subfigure}
                \caption{Numerical verification of the complex-SGD recovery in the complex-root Hardy space setting.}
                \label{fig:hardy_complex_numerics}
            \end{figure}
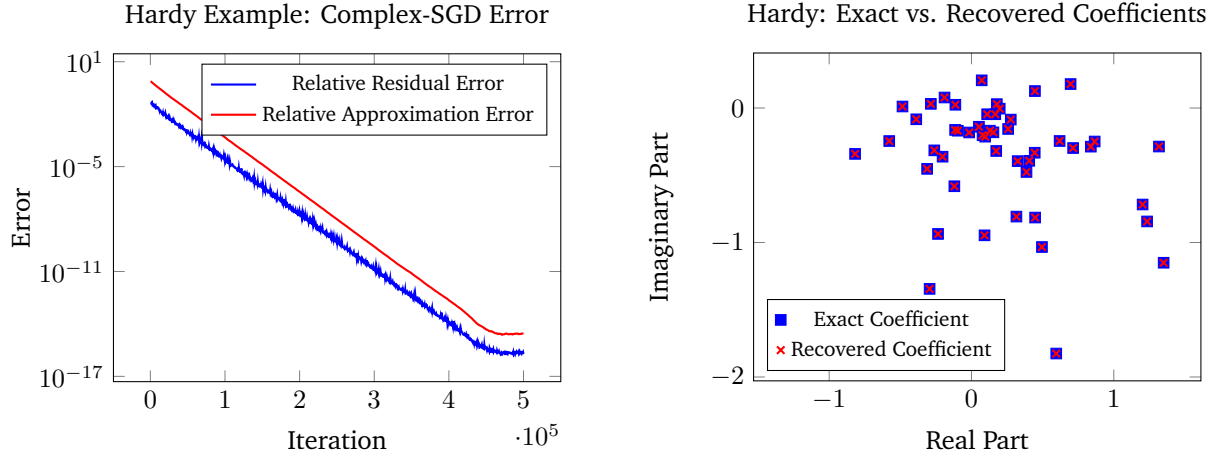
            
            To demonstrate the quality of the reconstruction $\balpha^{(500000)}$ at the function level in \Cref{fig:hardy_unit_circle_comparison}, we compare the exact Blaschke product with the function defined by the coefficients $\balpha^{(500000)}$ on the unit circle $z=e^{it}, t\in[0,2\pi]$. We will denote the reconstructed function by 
            \[
                B_{50}^{\mathrm{SGD}}(z)=c_0+\sum_{j=1}^{50} \balpha_j^{(500000)} K(z,a_j).
            \]
            \Cref{fig:hardy_unit_circle_real} and \Cref{fig:hardy_unit_circle_imag} show that the real and imaginary parts, respectively, of the reconstructed function $B_{50}^{\mathrm{SGD}}$ are visually indistinguishable from those of the exact Blaschke product. \Cref{fig:hardy_unit_circle_modulus} compares the the modulus of each function on the boundary of $\mathbb D$. Because $B_{50}$ is an inner function, we know $|B_{50}(e^{it})|=1$. We observe that the reconstruction almost perfectly mimics this behavior; that is, for the points $e^{i\theta_k}$ sampled from the unit circle, we have
            \[
                |B_{50}^{\mathrm{SGD}}(e^{i\theta_k})-1|\leq   \ve_k,
            \]
            for some choice of $0<\ve_k<5\cdot 10^{-14}$.
            
            \begin{figure}[h!]
                \centering
                \begin{subfigure}[t]{0.48\textwidth}
                    \centering
                    \begin{tikzpicture}
                        \begin{axis}[
                        width=0.95\textwidth,
                        height=0.6\textwidth,
                        xlabel={$\theta$},
                        legend pos=north east,
                        title={Real Part on $\del \mathbb D$}
                        ]
                        
                            \addplot[thick, blue] 
                            table[col sep=comma, x=theta, y=exact_real] 
                            {hardy/Hardy_plot4_unit_circle_real_part.csv};
                            \addlegendentry{\footnotesize{$\Re B_n(e^{i\theta})$}}
                
                            \addplot[thick, red, dashed] 
                            table[col sep=comma, x=theta, y=recovered_real] 
                            {hardy/Hardy_plot4_unit_circle_real_part.csv};
                            \addlegendentry{\footnotesize{$\Re B_n^{\mathrm{SGD}}(e^{i\theta})$}}
                        \end{axis}
                    \end{tikzpicture}
                    \caption{Real part on the unit circle.}
                    \label{fig:hardy_unit_circle_real}
                \end{subfigure}
                \hfill
                \begin{subfigure}[t]{0.48\textwidth}
                    \centering
                    \begin{tikzpicture}
                        \begin{axis}[
                        width=0.95\textwidth,
                        height=0.6\textwidth,
                        xlabel={$\theta$},
                        legend pos=north east,
                        title={Imaginary Part on $\del \mathbb D$}
                        ]
                        
                            \addplot[thick, blue] 
                            table[col sep=comma, x=theta, y=exact_imag] 
                            {hardy/Hardy_plot5_unit_circle_imaginary_part.csv};
                            \addlegendentry{\footnotesize{$\Im B_n(e^{i\theta})$}}
                
                            \addplot[thick, red, dashed] 
                            table[col sep=comma, x=theta, y=recovered_imag] 
                            {hardy/Hardy_plot5_unit_circle_imaginary_part.csv};
                            \addlegendentry{\footnotesize{$\Im B_n^{\mathrm{SGD}}(e^{i\theta})$}}
                        \end{axis}
                    \end{tikzpicture}
                    \caption{Imaginary part on the unit circle.}
                    \label{fig:hardy_unit_circle_imag}
                \end{subfigure}
                \hfill
                \begin{subfigure}[t]{0.95\textwidth}
                    \centering
                        \begin{tikzpicture}
                            \begin{axis}[
                                width=0.95\textwidth,
                                height=0.25\textwidth,
                                xlabel={$\theta$},
                                ylabel={Magnitude},
                                ymode=log,
                                legend pos=north east,
                                title={Boundary modulus on $\partial\mathbb D$},
                                tick label style={font=\small},
                                label style={font=\small},
                                title style={font=\small},
                            ]
                            
                                \addplot[thick, blue]
                                table[col sep=comma, x=theta, y=difference]
                                {hardy/plot6_unit_circle_modulus_display.csv};
                                \addlegendentry{\footnotesize{$||B_n(e^{i\theta})|-|B_n^{\mathrm{SGD}}(e^{i\theta})||$}}
                            
                            \node[anchor=south west, font=\small] at (rel axis cs:0,1.02) {$10^{-12}+1$};
                            \end{axis}
                        \end{tikzpicture}                       
                    \caption{Difference of boundary modulus on the unit circle.}
                    \label{fig:hardy_unit_circle_modulus}
                \end{subfigure}
                \caption{Comparison of the exact Blaschke product and the reconstruction obtained from the recovered Hardy-space kernel coefficients on the unit circle $z=e^{i\theta}$.}
                \label{fig:hardy_unit_circle_comparison}
            \end{figure}
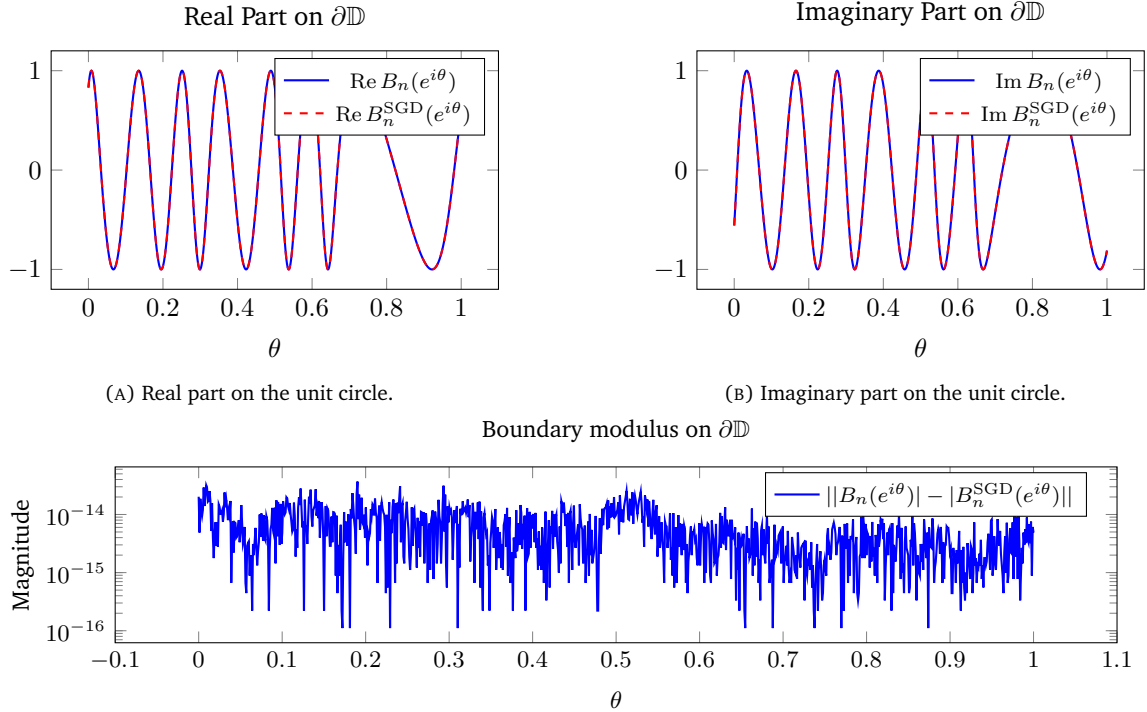
            
            In conclusion, the complex-SGD gives a unified and flexible framework for approximating target functions, including generalized supershifts. In particular, it makes it possible to explore other examples that previously could only be obtained through more direct constructions, as in \cite{AlpayDeMartinoDiki2025ComplexRepresenter}. We expect that this perspective will also be useful in other reproducing kernel spaces. In particular, the Mittag-Leffler--Fock space appears to be a natural setting for future work.

\section{Acknowledgements}
This material is based upon work supported by the National Science Foundation under Grant No. DMS-1929284 while EB was in residence at the Institute for Computational and Experimental Research in Mathematics in Providence, RI, during the "Stochastic and Randomized Algorithms in Scientific Computing: Foundations and Applications" semester program.

\newpage
\appendix

\section{Reproducing Kernel Hilbert Space Background}
\label{sec:rkhs}
    A reproducing kernel Hilbert space (RKHS) is a Hilbert space $\mcal H$ of real or complex-valued functions $f:\Omega\ra \C$ such that point-evaluation functionals are continuous (or equivalently, bounded). Such a space admits a function $K:\Omega\times \Omega \ra \C$, called the reproducing kernel, that has the following properties:
    \begin{enumerate}[(1)]
        \item For all $w\in \Omega$, we have $K_w(\cdot)\Def K(\cdot, w)\in \mcal H$.
        \item For all $f\in \mcal F$, $\lll f,K_w\rrr=f(w)$.
    \end{enumerate}
    We note that $K$ is uniquely defined by the Riesz representation theorem. Another key property of $K$ is that $K$ is positive semi-definite, i.e. for all $n\in \N$, $w_1,\dots, w_n\in \Omega$ and $c_1,\dots, c_n\in \C$, we have
    \[
        \sum_{i,j=1}^n c_j\cl{c_i}K(w_i,w_j)\geq 0.
    \]
    It can also be shown the matrix $K\Def \{K(w_i,w_j)\}_{1\leq   i,j\leq   n}$ is Hermitian. For more details specific to RKHS's, we refer the reader to \cite{saitoh1988}.
    
    An important RKHS result leveraged in applications is the ability to reduce the task of minimizing an objective function over an infinite dimensional space $\mcal H$ to a finite one. To see why this is possible, we first present the representer theorem, which was originally devised for real Hilbert spaces and then extended recently to complex Hilbert spaces \cite{AlpayDeMartinoDiki2025ComplexRepresenter}:
    
    \begin{theorem}[Complex Representer Theorem]
        \label{thm:crt}
        Let
        ${\mathcal{H}}$ be an RKHS with kernel $K$ and norm $\|\cdot\|_{{\mathcal{H}}}$. Suppose $g$
        a strictly monotonically increasing real-valued function on $[0, \infty)$.
        Let us consider the functional  $J:{\mathcal{H}}\to \mathbb{R}$ defined by
        \begin{equation}
            \label{eq:J}
            F(f) = L_{\bm{w}}\left(\left(w_1, f\left(z_1\right)\right), \ldots,\left( w_n, \left(z_n\right)\right)\right) + g(\|f\|_{{\mathcal{H}}}).
        \end{equation}
        for $f\in {\mathcal{H}}$, where $L_{\bm{w}}:\mathbb{C}^n \rightarrow\mathbb{R} \cup\{\infty\}$ is an arbitrary loss function for the dataset $\{(z_j,w_j)\}_{j=1}^n$. Then, any solution $f_\ast$ to the optimization problem
        \[
            f_\ast\in \underset{f\in{\mathcal{H}}}\argmin ~  F(f),
        \]
        has the form
        \begin{equation}
            \label{eq:fmin}
            f_\ast(\cdot)=\sum_{k=j}^n \alpha_j K\left(\cdot, z_j\right) .
        \end{equation}
        for suitable coefficients $\alpha_1,\dots,\alpha_n\in\mathbb{C}$.
    \end{theorem}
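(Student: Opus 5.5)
The plan is the classical orthogonal-projection argument for the representer theorem. Its only inputs are the reproducing property $\lll f,K_w\rrr=f(w)$ and the Pythagorean identity, and both hold verbatim in a complex Hilbert space. Let $S\Def\operatorname{span}_{\C}\{K(\cdot,z_1),\dots,K(\cdot,z_n)\}\subset\mcal H$. Since $S$ is finite-dimensional, it is closed, so the projection theorem gives an orthogonal decomposition $\mcal H=S\oplus S^\perp$. Every $f\in\mcal H$ then splits uniquely as $f=f_S+f_\perp$ with $f_S\in S$ and $f_\perp\in S^\perp$.

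\textbf{Step 1: the data term depends only on $f_S$.} For each $j$, the reproducing property and $f_\perp\perp K(\cdot,z_j)$ give
\[
f(z_j)=\lll f,K_{z_j}\rrr=\lll f_S,K_{z_j}\rrr+\lll f_\perp,K_{z_j}\rrr=f_S(z_j).
\]
Hence $L_{\bm w}\bigl((w_1,f(z_1)),\dots,(w_n,f(z_n))\bigr)=L_{\bm w}\bigl((w_1,f_S(z_1)),\dots,(w_n,f_S(z_n))\bigr)$. This holds for an arbitrary loss function, with no convexity or regularity required. (I read the displayed functional in \Cref{eq:J} with $f(z_n)$ in its last slot.)

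\textbf{Step 2: the regularizer strictly penalizes $f_\perp$.} The cross terms vanish, including their real parts, since $\lll f_S,f_\perp\rrr=0$. So Pythagoras gives $\|f\|_{\mcal H}^2=\|f_S\|_{\mcal H}^2+\|f_\perp\|_{\mcal H}^2$. In particular $\|f_S\|_{\mcal H}\le\|f\|_{\mcal H}$, with equality if and only if $f_\perp=0$. Because $g$ is strictly increasing, $g(\|f_S\|_{\mcal H})<g(\|f\|_{\mcal H})$ whenever $f_\perp\neq0$.

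\textbf{Step 3: conclusion.} Let $f_\ast$ be a minimizer. Then $F(f_{\ast,S})\le F(f_\ast)$, and the inequality is strict if $f_{\ast,\perp}\neq 0$. There is one degenerate case to handle: if the loss term equals $+\infty$ at $f_\ast$, then $F(f_\ast)=\infty$ for every $f$, and "any minimizer" is vacuous unless one restricts to finite values. I will note this and assume $F(f_\ast)<\infty$. Minimality then forces $f_{\ast,\perp}=0$, so $f_\ast=f_{\ast,S}\in S$. This is exactly the form $f_\ast(\cdot)=\sum_{j=1}^n\alpha_jK(\cdot,z_j)$ with $\alpha_j\in\C$.

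I expect no step to be a serious obstacle. The point needing the most care is Step 2: checking that no conjugation or real-part issue spoils the Pythagorean identity over $\C$. It does not, since orthogonality kills the full complex inner product. A second, minor point is making explicit that strict monotonicity of $g$ is what upgrades ``some minimizer lies in $S$'' to ``every minimizer lies in $S$.'' For the unregularized case discussed in \Cref{sec:kernel-regression}, the same argument shows only that $f_{\ast,S}$ is also a minimizer, matching the weaker claim made there.
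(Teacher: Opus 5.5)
Your orthogonal-projection argument is correct and is the classical route. The paper gives no proof of its own and simply cites \cite{AlpayDeMartinoDiki2025ComplexRepresenter}; its later remark that the proof ``may be adapted'' to the unregularized case, so that at least one minimizer lies in $\operatorname{span}\{K(\cdot,z_j)\}$, is exactly the weaker conclusion your argument yields once the regularizer is dropped, so you are following the intended argument. Your finiteness caveat is genuinely needed, not cosmetic (it is a restriction the statement as written omits, and ``vacuous'' is the wrong word): if $F(f_\ast)=+\infty$ then $F\equiv+\infty$, every $f\in\mathcal{H}$ is a minimizer, and the stated conclusion is false whenever that span is a proper subspace of $\mathcal{H}$.
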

    
    It stands to reason that instead of minimizing over an infinite-dimensional space $\mcal H$, it would be more feasible to minimize over the finite-dimensional space of coefficients $\balpha =(\alpha_1,\dots,\alpha_n)^T\in \C^n$. This reduction is especially convenient in the least-squares setting.
    
    Let $\{(z_i,y_i)\}_{i=1}^n$ be a given data set, where $z_i$ are sampled points and $y_i$ are the corresponding observations. We consider the regularized least-squares functional, $F$, over a given RKHS $\mathcal{H}$ with kernel denoted by $K$:
    \begin{equation}
        F(f)=\frac{1}{2}\sum_{i=1}^n |y_i-f(z_i)|^2+\lambda\|f\|_{\mathcal H}^2, \qquad \lambda>0. \tag{\ref{eq:kernel-LS-functional}}
    \end{equation}
    By the Representer Theorem, the minimizer of \eqref{eq:kernel-LS-functional} is of the form
    \begin{equation}\label{eq:kernel-representer-LS}
        f_*(z)=\sum_{j=1}^n \alpha_j K(z,z_j).
    \end{equation}
    
    Letting $\by=(y_1,\dots,y_n)^T\in\mathbb{C}^n$ and $K\in\mathbb{C}^{n\times n}$ denote the Gram matrix
    \[
        K_{ij}=K(z_i,z_j), \qquad 1\leq   i,j\leq   n,
    \]
    we see the vector of fitted values at the sample points is
    \[
        \bigl(f_\ast(z_1),\dots,f_\ast(z_n)\bigr)^T=K\balpha.
    \]
    Moreover, the norm of $f_\ast$ satisfies
    \[
        \|f_\ast\|_{\mcal H}^2=\left\|\sum_{j=1}^n \alpha_j K(\cdot,z_j)\right\|_{\mathcal H}^2=\sum_{i,j=1}^n \overline{\alpha_i}\alpha_j K(z_i,z_j)=\balpha^*K\balpha.
    \]
    Hence the functional \eqref{eq:kernel-LS-functional} can be rewritten as
    \begin{equation}
        F(\balpha)=\frac{1}{2}\|\by-K\balpha\|^2+\lambda\,\balpha^*K\balpha=\frac{1}{2}(\by-K\balpha)^\ast(\by-K\balpha)+\lambda\,\balpha^*K\balpha\tag{\ref{eq:Falpha-kernel}}.
    \end{equation}
    Thus, the minimization problem in $\mathcal H$ is reduced to a finite-dimensional optimization problem of recovering a coefficient vector $\balpha\in\C^n$. However, RKHS's admit another reduction simplifying the task of recovering $\balpha_\ast$.
    
    \begin{proposition}
    [{\cite[Proposition 2.5]{AlpayDeMartinoDiki2025ComplexRepresenter}}]
        \label{prop:K_plus_lamba_I}
        Let $\y = (y_0, \ldots, y_n)^T\in \mathbb{C}^{(n+1) \times 1}$, $\balpha= (\alpha_0, \ldots, \alpha_n)^T\in \mathbb{C}^{(n+1) \times 1}$ and $K \in \mathbb{C}^{(n+1) \times (n+1)}$ being a symmetric matrix with entries $K(z_i, z_j)$ for $0 \leq   i, j \leq   n$. Then $ F(\balpha)$ (as given in \eqref{eq:Falpha-kernel}) is minimized 
        if and only if
        \begin{equation}
            \label{eq:w}
            \y = (K+\lambda I)\balpha.
        \end{equation}
    \end{proposition}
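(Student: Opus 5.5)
The plan is to treat $F$ as a real-valued convex quadratic on $\C^{n+1}$ and characterize its minimizers by the vanishing of the complex gradient $\nabla F=2\nabla_{\cl{\balpha}}F$. First I fix the normalization. The proposition is stated for the reduction of \eqref{eq:kernel-LS-functional} in the form $\sum_i|y_i-f(z_i)|^2+\lambda\|f\|_{\mcal H}^2$, which gives $F(\balpha)=\|\y-K\balpha\|^2+\lambda\balpha\dl K\balpha$. With the factor $\frac12$ in \eqref{eq:Falpha-kernel}, the same argument works but yields $(K+2\lambda I)$ instead of $(K+\lambda I)$; the factor can be absorbed into $\lambda$. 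I also use that $K$ is a Gram matrix of a reproducing kernel, so $K\dl=K$ and $K\succeq 0$ (\Cref{sec:rkhs}). Here ``symmetric'' in the statement should be read as Hermitian.

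\emph{Step 1: compute the gradient.} Expanding $F(\balpha)=\y\dl\y-\y\dl K\balpha-\balpha\dl K\y+\balpha\dl K^2\balpha+\lambda\balpha\dl K\balpha$ and treating $\balpha,\cl{\balpha}$ as independent (Wirtinger calculus), I get
\[
\nabla_{\cl{\balpha}}F=K^2\balpha-K\y+\lambda K\balpha=K\bigl((K+\lambda I)\balpha-\y\bigr).
\]

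\emph{Step 2: convexity, so stationarity is equivalent to minimality.} $F$ is a real quadratic form in $(\Re\balpha,\Im\balpha)$ with Hessian form $\balpha\dl(K^2+\lambda K)\balpha\geq 0$. Hence $F$ is convex, and the exact second-order expansion
\[
F(\bm\beta)=F(\balpha)+\lll \nabla F(\balpha),\bm\beta-\balpha\rrr_\R+(\bm\beta-\balpha)\dl(K^2+\lambda K)(\bm\beta-\balpha)
\]
shows that $\balpha$ is a global minimizer if and only if $\nabla F(\balpha)=0$, i.e.\ $K\bigl((K+\lambda I)\balpha-\y\bigr)=0$.

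\emph{Step 3: both directions.} If $\y=(K+\lambda I)\balpha$, the gradient vanishes, so $\balpha$ minimizes $F$ by Step 2. Conversely, a minimizer satisfies $(K+\lambda I)\balpha-\y\in\ker K$.

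The main obstacle is this converse when $K$ is singular. If $K$ is invertible, which is the case for distinct nodes in the Fock, RBF and Hardy examples where the Gram matrix is positive definite, the equation $\y=(K+\lambda I)\balpha$ follows immediately. Otherwise the coefficient vector is determined only modulo $\ker K$. Adding $\bm\delta\in\ker K$ does not change the function $f=\sum\alpha_jK(\cdot,z_j)$, since $\|\sum\delta_jK(\cdot,z_j)\|_{\mcal H}^2=\bm\delta\dl K\bm\delta=0$. I would therefore state the converse as: some representative satisfies $\y=(K+\lambda I)\balpha$. This is always solvable because $K+\lambda I$ is invertible for $\lambda>0$, and that solution is a minimizer by the forward direction. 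Alternatively, I would state the equivalence under the standing assumption that $K$ is nonsingular.
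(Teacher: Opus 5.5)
The paper does not prove this proposition; it is quoted from \cite[Proposition 2.5]{AlpayDeMartinoDiki2025ComplexRepresenter}, so there is no internal argument to compare with. Your proof is correct and self-contained. I checked the exact expansion $F(\balpha+\bm h)=F(\balpha)+\lll\nabla F(\balpha),\bm h\rrr_\R+\bm h\dl(K^2+\lambda K)\bm h$ with $\nabla F=2\nabla_{\cl{\balpha}}F$. It gives sufficiency of $\nabla F(\balpha)=0$ directly, and necessity by taking $\bm h=-t\nabla F(\balpha)$ with $t>0$ small. That necessity step deserves one explicit sentence, since you only assert it. The expansion alone already identifies the gradient, so your Step~1 serves as a consistency check rather than an essential ingredient.

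Your caveats are genuine defects of the statement as printed, not of your argument. With the factor $\frac12$ in \eqref{eq:Falpha-kernel}, the stationarity condition is $K\bigl((K+2\lambda I)\balpha-\y\bigr)=0$. This agrees with the paper's own formula $\nabla F(\z)=A\dl(A\z-\b)+2\lambda W\z$ in \Cref{sec:complex_grad_computation}, specialized to $A=W=K$. By contrast, the data in \Cref{thm:superoscillation-minimizer} satisfy $w_k=(K\balpha_*)_k+\lambda C_k(n,a)$, so they correspond to the unhalved functional you use. ``Symmetric'' must mean Hermitian for $\balpha\dl K\balpha$ to be real. Finally, for singular $K$ the minimizer set is exactly $(K+\lambda I)\inv\y+\ker K$, because $K(K+\lambda I)\bm\delta=0$ forces $K\bm\delta=0$. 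Hence the ``only if'' direction is false at the level of coefficient vectors. It holds precisely when $K$ is invertible (as in the paper's examples with distinct nodes), or modulo $\ker K$, as you say.
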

    
    This allows us to minimize 
    \[
        \frac{1}{2}\|(K+\lambda I)\balpha\|^2,    
    \]
    instead of the regularized problem in \Cref{eq:Falpha-kernel} in order to recover the function $f_\ast$. This is useful because complex-SGD admits a simpler gradient update when there is no regularization term; the exact details are presented in \Cref{sec:complex_grad_computation}. 

\newpage
\section{SGD Proofs}
\label{sec:proofs}
    \subsection{Convergence of Average of SGD Iterates}
    \label{sec:proof_conv_avg}
        With some abuse of notation, we will write $f(\z)$ and $\nabla f(\z)$ to mean the random vectors $f(\z;\omega)$ and $\nabla f(\z;\omega)$, respectively. We also note that the following proofs are originally from \cite{handbook} but are adapted to fit the new complex-parameter scheme with the assumptions outlined in \Cref{sec:problem_setup}.
        \begin{lemma}
            With the assumptions of \Cref{sec:problem_setup}, except \Cref{assum:mu_convex_complex}, we have
            \[
                \frac{1}{2L}\E\|\nabla f(\z)-\nabla f(\zstar)\|^2\leq   F(\z)-F(\zstar).
            \]
        \end{lemma}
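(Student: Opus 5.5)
This is the complex analogue of the standard co-coercivity bound: combine the smoothness and convexity inequalities for a single component, take expectations, and use stationarity of $\zstar$. The plan is to first prove, for a fixed $\omega$ (almost surely) and arbitrary $\z,\w\in\C^n$,
\[
    f(\w;\omega)-f(\z;\omega)-\lll \nabla f(\z;\omega),\w-\z\rrr_\R \geq \frac{1}{2L}\|\nabla f(\w;\omega)-\nabla f(\z;\omega)\|^2 .
\]
Then I would set $\w=\z$ (the current point) and $\z=\zstar$ in this inequality and take expectations.

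For the per-component inequality I would follow the usual auxiliary-point trick. Define $\bm{u}=\w-\frac{1}{L}\bigl(\nabla f(\w;\omega)-\nabla f(\z;\omega)\bigr)$ and split
\[
    f(\w)-f(\z)=\bigl(f(\w)-f(\bm{u})\bigr)+\bigl(f(\bm{u})-f(\z)\bigr).
\]
The first difference is bounded below using \Cref{assum:L_lip_complex} at base point $\w$:
\[
    f(\bm{u})\leq f(\w)+\lll\nabla f(\w),\bm{u}-\w\rrr_\R+\frac{L}{2}\|\bm{u}-\w\|^2 .
\]
The second difference is bounded below using \Cref{assum:convex_complex} at base point $\z$:
\[
    f(\bm{u})\geq f(\z)+\lll\nabla f(\z),\bm{u}-\z\rrr_\R .
\]
Substituting $\bm{u}-\w=-\frac1L\bigl(\nabla f(\w)-\nabla f(\z)\bigr)$ and expanding should give $\lll\nabla f(\z),\w-\z\rrr_\R+\frac{1}{2L}\|\nabla f(\w)-\nabla f(\z)\|^2$. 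The algebra only needs bilinearity and symmetry of $\lll\cdot,\cdot\rrr_\R$ over $\R$, together with $\lll \bm{x},\bm{x}\rrr_\R=\|\bm{x}\|^2$. Both hold because $\Re\lll\cdot,\cdot\rrr$ is exactly the Euclidean inner product on $\R^{2n}$, so the real computation goes through unchanged.

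Next, with $\z\to\zstar$ and $\w\to\z$, take expectations. By linearity of $\Re$ and of expectation, together with \Cref{assum:unbiased_complex},
\[
    \E\lll\nabla f(\zstar;\omega),\z-\zstar\rrr_\R=\lll\nabla F(\zstar),\z-\zstar\rrr_\R ,
\]
which vanishes by \Cref{assum:stationary_minima_complex}. This leaves $F(\z)-F(\zstar)\geq \frac{1}{2L}\E\|\nabla f(\z)-\nabla f(\zstar)\|^2$, as claimed. Here $\zstar$ is a minimizer; \Cref{assum:bounded_below_complex} gives finiteness, and I implicitly assume a minimizer exists, as the statement does.

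The main obstacle is the care needed in the real-inner-product algebra. The cross term $\lll \nabla f(\w),\nabla f(\w)-\nabla f(\z)\rrr_\R$ must combine with $\lll\nabla f(\z),\cdot\rrr_\R$ into $-\frac1L\|\nabla f(\w)-\nabla f(\z)\|^2$, and this step uses the symmetry of $\Re\lll\cdot,\cdot\rrr$, which the Hermitian form itself lacks. A secondary point is integrability: one must ensure that the expectations exist, for instance via $\s_\ast<\infty$.
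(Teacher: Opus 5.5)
Your proposal is correct and is essentially the paper's proof. The paper likewise splits $f(\x)-f(\y)$ through the auxiliary point $\y-\frac{1}{L}(\nabla f(\y)-\nabla f(\x))$, bounds one piece by \Cref{assum:convex_complex} and the other by \Cref{assum:L_lip_complex} to get $\frac{1}{2L}\|\nabla f(\y)-\nabla f(\x)\|^2\leq \lll \nabla f(\x),\x-\y\rrr_\R+f(\y)-f(\x)$, then sets $\x=\zstar$, $\y=\z$ and removes the cross term using unbiasedness and stationarity, exactly as you do (a small aside: combining the cross terms only needs real-linearity in the first slot and $\lll \bm{d},\bm{d}\rrr_\R=\|\bm{d}\|^2$, so symmetry of $\Re\lll\cdot,\cdot\rrr$ is not really the crux, though it does hold).
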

        \begin{proof}
            Behold:
            \begin{align*}
                f(\x)-f(\y) &= f(\x)-f(\z) + f(\z)-f(\y) \\
                &\leq   \lll \nabla f(\x), \x-\z \rrr_\R + \lll \nabla f(\y),\z-
                \y\rrr_\R + \frac{L}{2}\|\z-\y\|^2.
            \end{align*}
            In the ordinary case, we use $\z=\y-\frac{1}{L}(\nabla f(\y)-\nabla f(\x))$ to minimize the last term. For complex SGD, this will suffice as well. Doing such yields
            \begin{align*}
                f(\x)-f(\y)&\leq   \lll \nabla f(\x),\x-\y\rrr_\R + \frac{1}{L}\lll \nabla f(\x),\nabla f(\y)-\nabla f(\x)\rrr_\R - \frac{1}{L}\lll \nabla f(\y),\nabla f(\y)-\nabla f(\x)\rrr_\R + \frac{1}{2L}\|\nabla f(\y)-\nabla f(\x)\|^2 \\
                &= \lll \nabla f(\x),\x-\y\rrr_\R - \frac{1}{L}\lll \nabla f(\y)-\nabla f(\x),\nabla f(\y)-\nabla f(\x)\rrr_\R + \frac{1}{2L}\|\nabla f(\y)-\nabla f(\x)\|^2 \\
                &=\lll \nabla f(\x),\x-\y\rrr_\R - \frac{1}{2L}\|\nabla f(\y)-\nabla f(\x)\|^2.
            \end{align*}
            By re-arranging we have
            \[
                \frac{1}{2L}\|\nabla f(\y)-\nabla f(\x)\|^2\leq   \lll \nabla f(\x),\x-\y\rrr_\R + f(\y)-f(\x),
            \]
            so
            \begin{align*}
                \frac{1}{2L}\E\|\nabla f(\z)-\nabla f(\zstar)\|^2 &\leq   \E[ (\lll \nabla f(\zstar),\zstar-\z\rrr_R] + \E f(\z)-\E f(\zstar) \\
                &=\lll \nabla F(\zstar),\zstar-\z\rrr_\R + F(\z)-F(\zstar) \\
                &= F(\z)-F(\zstar).
            \end{align*}
        \end{proof}
        
        This leaves us equipped to prove \Cref{thm:avg_iter_converge_complex}:
        
        \begin{proof}[Proof of
        \Cref{thm:avg_iter_converge_complex}]
        Updates are given by $\z^{(t+1)}=\z^{(t)}- \eta_t \nabla f(\z^{(t)};\omega)$, so if we let $\zstar$ be the minimizer of $F$, we may write
        \[
            \|\z^{(t+1)}-\zstar\|^2=\|\z^{(t)}-\zstar\|^2+2\eta_t \lll \nabla f(\z^{(t)}),\zstar - \z^{(t)}\rrr_\R + \eta_t^2\|\nabla f(\z^{(t)})\|^2.
        \]
        By \Cref{assum:convex_complex}, we have
        \[
            2\eta_t \lll \nabla F(\z^{(t)}),\zstar - \z^{(t)}\rrr_\R \leq   2\eta_t (F(\zstar)-F(\z^{(t)})),
        \]
        so with \Cref{assum:unbiased_complex}
        \begin{align*}
            \E\groupb{\|\z^{(t+1)}-\zstar\|^2\mid \z^{(t)}} &= \|\z^{(t)}-\zstar\|^2+2\eta_t \lll \nabla F(\z^{(t)}),\zstar - \z^{(t)}\rrr_\R + \eta_t^2 \E\groupb{\|\nabla f(\z^{(t)})\|^2\mid \z^{(t)}} \\
            &\leq   \|\z^{(t)}-\zstar\|^2+2\eta_t(F(\z^{(t)})-F(\zstar)) + \eta_t^2 \E\groupb{\|\nabla f(\z^{(t)})\|^2\mid \z^{(t)}}.
        \end{align*}
        To bound the last term, we first observe
        \begin{align*}
            \E\groupb{\|\nabla f(\z^{(t)})\|^2\mid \z^{(t)}} &\leq   2\E\groupb{\|\nabla f(\z^{(t)})-\nabla f(\zstar)\|^2\mid \z^{(t)}}+2\E\groupb{\|\nabla f(\zstar)\|^2\mid \z^{(t)}} \\
            &\leq   4L(F(\z^{(t)})-F(\zstar))+2\s_\ast.
        \end{align*}
        \Cref{assum:bounded_var_complex} and \Cref{assum:stationary_minima_complex} ensure $\s_\ast<\infty$. Then we have
        \begin{align*}
            \E\groupb{\|\z^{(t+1)}-\zstar\|^2\mid \z^{(t)}} &\leq   \|\z^{(t)}-\zstar\|^2+2\eta_t(2\eta_t L - 1) (F(\z^{(t)})-F(\zstar))+2\eta_t^2 \s_\ast \\
            &\leq   \|\z^{(t)}-\zstar\|^2-\eta_t (F(\z^{(t)})-F(\zstar))+2\eta_t^2 \s_\ast.
        \end{align*}
        because $\eta_t<\frac{1}{4L}$, i.e. $2(2\eta_t L-1)\leq   -1$. We may re-arrange this and take the total expectation to see
        \[
            \eta_t \E[F(\z^{(t)})-F(\zstar)]\leq   \E\|\z^{(t)}-\zstar\|^2 -\E\|\z^{(t+1)}-\zstar\|^2+2\eta_t^2\s_\ast,
        \]
        whereby
        \[
            \sum_{t=0}^{T-1} \eta_t \E[F(\z^{(t)})-F(\zstar)] \leq   \|\z^{(0)}-\zstar\|^2 + 2\s_\ast \sum_{t=0}^{T-1} \eta_t^2.
        \]
        Finally, we see
        \begin{align*}
            \E\groupb{F\groupp{\sum_{t=0}^{T-1}\frac{\eta_t}{\sum_{i=0}^{T-1}\eta_i}\z^{(t)}}-F(\zstar)}&\leq   \E\groupb{\sum_{t=0}^{T-1}\frac{\eta_t}{\sum_{i=0}^{T-1}\eta_i}(F(\z^{(t)})-F(\zstar))} \\
            &\leq   \frac{\|\z^{(0)}-\zstar\|^2}{\sum_{t=0}^{T-1} \eta_t}+\frac{2\s_\ast \sum_{t=0}^{T-1}\eta_t^2}{\sum_{t=0}^{T-1} \eta_t}.
        \end{align*}
        \end{proof}
    
    \subsection{Convergence in \texorpdfstring{$\mu$}{mu}-Strongly Convex Case}\label{sec:proof_mu_strong_conv}
        As in \Cref{sec:proof_conv_avg}, we will write $f(\z)$ and $\nabla f(\z)$ to mean the random vectors $f(\z;\omega)$ and $\nabla f(\z;\omega)$, respectively.
        \begin{proof}[Proof of \Cref{thm:mu_convex_convergence_complex}]
            Similarly to the proof of \Cref{thm:avg_iter_converge_complex}, we will use the decomposition
            \[
                \|\z^{(t+1)}-\zstar\|^2=\|\z^{(t)}-\zstar\|^2-2\eta_t \lll \nabla f(\z^{(t)}), \z^{(t)}-\zstar \rrr_\R + \eta_t^2\|\nabla f(\z^{(t)})\|^2.
            \]
            Then we have
            \begin{align*}
                \E \groupb{\|\z^{(t+1)}-\zstar\|^2\mid \z^{(t)}}&= \|\z^{(t)}-\zstar\|^2-2\eta_t \lll \nabla F(\z^{(t)}),\z^{(t)}-\zstar\rrr_\R +\eta_t^2\E\groupb{\|\nabla f(\z^{(t)})\|^2\mid \z^{(t)}}.
            \end{align*}
            Taking total expectation after using \Cref{assum:mu_convex_complex} and the inequality
            \[
                \E\groupb{\|\nabla f(\z^{(t)})\|^2\mid \z^{(t)}} \leq   4L(F(\z^{(t)})-F(\zstar))+2\s_\ast.
            \]
            from the proof of \Cref{thm:avg_iter_converge_complex}, we have
            \begin{align*}
                \E\|\z^{(t+1)}-\zstar \|^2 &\leq   (1-\eta_t\mu)\E\|\z^{(t)}--zstar\|^2+2\eta_t^2\s_\ast + 2\eta_t(2\eta_t L - 1)(F(\z^{(t)})-F(\zstar))\\
                &\leq   (1-\eta_t\mu)\E\|\z^{(t)}-\zstar\|^2+2\eta_t^2\s_\ast.  \tag{$\eta_t < \frac{1}{2L}$}
            \end{align*}
            By repeating this inequality for lesser $t$ and assuming $\eta_t\geq c>0$ for all $t$, we arrive at
            \begin{align*}
                \E\|\z^{(t+1)}-\zstar\|^2 &\leq   \groupb{\prod_{j=0}^t (1-\eta_j\mu)}\|\z^{(0)}-\zstar\|^2+2\s_\ast \sum_{j=0}^t \eta_{t-j}^2(1-\eta_i \mu)^j\\
                &\leq   (1-c\mu)^{t+1}\|\z^{(0)}-\zstar\|^2+\frac{\s_\ast}{2L^2} \sum_{j=0}^\infty (1-c \mu)^j \\
                &\leq    (1-c\mu)^{t+1}\|\z^{(0)}-\zstar\|^2+\frac{\s_\ast}{2L^2c\mu}.
            \end{align*}
            If instead the step sizes are constant, i.e. $\eta_t=\eta \in (0,1/2L)$ for all $t$, we may replace the second inequality with
            \[
                (1-\eta \mu)^{t+1}\|\z^{(0)}-\zstar\|^2+\frac{2\s_\ast \eta}{\mu}.
            \]
        \end{proof}
    
    \subsection{Stationary convergence of complex SGD with adaptive stepsizes}
        \begin{proof}[Proof of \Cref{thm:stationary_adaptive_complex}]
            Let $g_t:=\nabla f(\z^{(t)};\omega_t)$, so $\z^{(t+1)}=\z^{(t)}-\eta_t g_t$.
            By \Cref{assum:L_lip_complex} applied to $F$ with $\w=\z^{(t+1)}$ and $\z=\z^{(t)}$,
            \[
                F(\z^{(t+1)}) \leq F(\z^{(t)})+\lll \nabla F(\z^{(t)}),\z^{(t+1)}-\z^{(t)}\rrr_{\R}+\frac{L}{2}\|\z^{(t+1)}-\z^{(t)}\|^2.
            \]
            Substituting $\z^{(t+1)}-\z^{(t)}=-\eta_t g_t$ gives
            \[
                F(\z^{(t+1)})\leq F(\z^{(t)})-\eta_t\lll \nabla F(\z^{(t)}),g_t\rrr_{\R}+\frac{L\eta_t^2}{2}\|g_t\|^2.
            \]
            Taking conditional expectation given $\z^{(t)}$ and using \Cref{assum:unbiased_complex},
            \[
                \E\big[\lll \nabla F(\z^{(t)}),g_t\rrr_{\R}\,\big|\,\z^{(t)}\big]=\lll \nabla F(\z^{(t)}),\E[g_t\mid \z^{(t)}]\rrr_{\R}=\|\nabla F(\z^{(t)})\|^2.
            \]
            Moreover, by the second-moment decomposition and \Cref{assum:bounded_var_complex},
            \[
                \E[\|g_t\|^2\mid \z^{(t)}]=\|\nabla F(\z^{(t)})\|^2+\E[\|g_t-\nabla F(\z^{(t)})\|^2\mid \z^{(t)}]\leq   \|\nabla F(\z^{(t)})\|^2+\sigma^2.
            \]
            Combining yields
            \[
                \E[F(\z^{(t+1)})\mid \z^{(t)}]\leq F(\z^{(t)})-\eta_t\Big(1-\frac{L\eta_t}{2}\Big)\|\nabla F(\z^{(t)})\|^2+\frac{L\eta_t^2}{2}\sigma^2.
            \]
            If $\eta_t\leq 1/L$, then $1-\frac{L\eta_t}{2}\geq \frac12$, hence
            \[
                \frac{\eta_t}{2}\|\nabla F(\z^{(t)})\|^2\leq F(\z^{(t)})-\E[F(\z^{(t+1)})\mid \z^{(t)}]+\frac{L\eta_t^2}{2}\sigma^2.
            \]
            Taking total expectations and summing from $t=0$ to $T-1$ yields a telescoping sum:
            \[
                \frac12\sum_{t=0}^{T-1}\eta_t\,\E\|\nabla F(\z^{(t)})\|^2\leq F(\z^{(0)})-\E[F(\z^{(T)})]+\frac{L\sigma^2}{2}\sum_{t=0}^{T-1}\eta_t^2 \leq F(\z^{(0)})-F_\star+\frac{L\sigma^2}{2}\sum_{t=0}^{T-1}\eta_t^2.
            \]
            Multiplying by $2$ gives the claimed bound, and the minimum bound follows since $\min_t a_t\leq (\sum_t \eta_t a_t)/(\sum_t \eta_t)$ for $\eta_t>0$.
        \end{proof}

\newpage

\section{Directional Bias Proof}
\label{sec:directional_bias_proof}
    For completeness, we include the proof of \Cref{cor:dir_bias}, but we also note that it is due to Steinerberger \cite{RK_small_sing} and only minor modifications are made to hold in the complex SGD and inconsistent settings.
    \begin{proof}[Proof of \Cref{cor:dir_bias}]
        Firstly, we assume the system $A\x=\b$ is consistent with solution $\zstar$. With this assumption, we may assume $\b=0$ as was done in \cite{RK_small_sing}.
    
        With the simplifying assumption $\b=0$, we have $\z^{(t+1)}=\z^{(t)}-\eta n \a_i \z^{(t)}\a_i\dl$. If we let $\bm{v}_k$ denote the $k$-th right singular vector of $A$, we may write
        \begin{align*}
            \E \lll \z^{(t+1)},\bm{v}_k\rrr &= \sum_{i=1}^m \frac{1}{m}\lll \z^{(t)}-\eta n \a_i \z^{(t)}\a_i\dl,\bm{v}_k\rrr \\
            &=\frac{1}{m}\sum_{i=1}^m(\lll \z^{(t)},\bm{v}_k\rrr-\eta n \a_i \z^{(t)}\lll \a_i\dl, \bm{v}_k\rrr)\\
            &=\lll \z^{(t)},\bm{v}_k\rrr - \frac{\eta n}{m} \sum_{i=1}^m \a_i\z^{(t)}\lll \a_i\dl,\bm{v}_k\rrr.
        \end{align*}
        Because we are using the complex inner product, we note that $\lll \a_i\dl,\bm{v}_k\rrr$ is the conjugate of the $i$-th entry of $A\bm{v}_k$. It is also clear $\a_i\z^{(t)}$ is the $i$-th entry of $A\z^{(t)}$. Then we may write
        \[
            \sum_{i=1}^m \a_i\z^{(t)}\lll \a_i\dl,\bm{v}_k\rrr = \lll A\z^{(t)}, A\bm{v}_k\rrr.
        \]
        If we let $\bm{u}_k$ denote the $k$-th right singular vector of $A$ satisfying $A\bm{v}_k=\s_k \bm{u}_k$, we may observe
        \begin{align*}
            \lll \z^{(t)},\bm{v}_k\rrr - \frac{\eta n}{m} \sum_{i=1}^m \a_i\z^{(t)}\lll \a_i\dl,\bm{v}_k\rrr &= \lll \z^{(t)},\bm{v}_k\rrr - \frac{\eta n}{m} \lll A\z^{(t)}, A\bm{v}_k\rrr \\
            &= \lll\z^{(t)},\bm{v}_k\rrr - \frac{\eta n}{m} \left\lll A\groupp{\sum_{i=1}^n \lll \z^{(t)},\bm{v}_i\rrr \bm{v}_i}, A\bm{v}_k\right\rrr \\
            &=\lll\z^{(t)},\bm{v}_k\rrr - \frac{\eta n}{m} \left\lll \sum_{i=1}^n \s_i\lll \z^{(t)},\bm{v}_i\rrr \bm{u}_i, \s_k\bm{u}_k\right\rrr \\
            &= \lll\z^{(t)},\bm{v}_k\rrr - \frac{\eta n}{m} \s_k \lll\z^{(t)},\bm{v}_k\rrr \\
            &= \lll\z^{(t)},\bm{v}_k\rrr \groupp{1-\frac{\eta n \s_k^2}{m}}.
        \end{align*}
        By taking the total expectation and iterating, we have
        \[
            \E\lll \z^{(t+1)},\bm{v}_k\rrr = \groupp{1-\frac{\eta n \s_k^2}{m}}^{t+1}\lll \z^{(0)},\bm{v}_k\rrr.
        \]
    
        Now we consider the inconsistent case. To model this, we will apply SGD to the system $A\x=\b+\bm{\ve}$, which we assume is inconsistent, but we also assume $\zstar$ is a solution to the system $A\x=\b$. Then we may write
        \begin{align*}
            \E \lll \z^{(t+1)}-\zstar,\bm{v}_k\rrr &= \sum_{i=1}^m \frac{1}{m}\lll \z^{(t)}+\eta n (b_i+\ve_i - \a_i \z^{(t)})\a_i\dl-\zstar,\bm{v}_k\rrr \\
            &=\frac{1}{m}\sum_{i=1}^m(\lll \z^{(t)}-\zstar,\bm{v}_k\rrr-\eta n \a_i (\z^{(t)}-\zstar)\lll \a_i\dl, \bm{v}_k\rrr)+\frac{\eta n}{m}\sum_{i=1}^m \ve_i \lll \a_i\dl,\bm{v_k}\rrr \\
            &=\lll \z^{(t)}-\zstar,\bm{v}_k\rrr - \frac{\eta n}{m} \sum_{i=1}^m \a_i(\z^{(t)}-\zstar)\lll \a_i\dl,\bm{v}_k\rrr + \frac{\eta n}{m}\sum_{i=1}^m \ve_i \lll \a_i\dl,\bm{v_k}\rrr .
        \end{align*}
        We may reproduce the same steps as in the consistent setting to conclude
        \[
            \lll \z^{(t)}-\zstar,\bm{v}_k\rrr - \frac{\eta n}{m} \sum_{i=1}^m \a_i(\z^{(t)}-\zstar)\lll \a_i\dl,\bm{v}_k\rrr = \lll \z^{(t)}-\zstar ,\bm{v}_k\rrr \groupp{1-\frac{\eta n \s_k^2}{m}},
        \]
        and
        \[
            \frac{\eta n}{m}\sum_{i=1}^m \ve_i \lll \a_i\dl,\bm{v_k}\rrr = \frac{\eta n}{m}\lll \bm{\ve},A\bm{v}_k\rrr =  \frac{\eta n \s_k}{m}\lll \bm{\ve},\bm{u}_k\rrr.
        \]
        Then
        \[
            \E \lll \z^{(t+1)}-\zstar,\bm{v}_k\rrr = \lll \z^{(t)}-\zstar ,\bm{v}_k\rrr \groupp{1-\frac{\eta n \s_k^2}{m}} +  \frac{\eta n \s_k}{m}\lll \bm{\ve},\bm{u}_k\rrr \leq   \lll \z^{(t)}-\zstar ,\bm{v}_k\rrr \groupp{1-\frac{\eta n \s_k^2}{m}} + \frac{\eta n \s_k}{m}\|\bm{\ve}\|.
        \]
        By taking the total expectation and iterating, we have
        \begin{align*}
            \E \lll \z^{(t+1)}-\zstar,\bm{v}_k\rrr &= \groupp{1-\frac{\eta n \s_k^2}{m}}^{t+1}\lll \z^{(0)}-\zstar,\bm{v}_k\rrr + \sum_{i=0}^{t}\frac{\eta n \s_k}{m}\groupp{1-\frac{\eta n \s_k}{m}}^{t-i} \|\bm{\ve}\| \\
            &\leq   \groupp{1-\frac{\eta n \s_k^2}{m}}^{t+1}\lll \z^{(0)}-\zstar,\bm{v}_k\rrr+\|\bm{\ve}\|.
        \end{align*}
        We may similarly show
        \[
            \E \lll \z^{(t+1)}-\zstar,\bm{v}_k\rrr \geq \groupp{1-\frac{\eta n \s_k^2}{m}}^{t+1}\lll \z^{(0)}-\zstar,\bm{v}_k\rrr-\|\bm{\ve}\|.
        \]
    \end{proof}

\newpage

\section{Least Squares Objective with Complex Gradient}
\label{sec:least_squares_example}
    \subsection{Complex Gradient Computation}
    \label{sec:complex_grad_computation}
        The least squares objective for a system $A\x=\bm{b}$ for $A\in\C^{m\times n}$ may be expressed as
        \[
            F(\z)=\frac{1}{2}\|A\z-\b\|^2=\frac{m}{2}\sum_{j=1}^m \frac{1}{m}| \a_j\z - b_j|^2 = \E f_j(\z),
        \]
        where $\a_j\in \C^{1\times n}$ is the $j$-th row of $A$ and $f_j(\z)=\frac{m}{2}|\a_j\z-b_j|^2$ for $j\sim \unif(m)$. We will show
        \[
            \nabla f_j (\z)=2\nabla_{\cl{\z}}f_j(\z)=n(\a_j\z-b_j)\a_j\dl,
        \]
        whereby
        \begin{equation}
            \nabla F(\z)=\sum_{j=1}^m(\a_j\z-b_j)\a_j\dl=A\dl (A\z-\b). \label{eq:complex_grad_least_square}
        \end{equation}
        
        \begin{lem}
            \label{lem:comp_sgd_update_LS}
            Using the (complex) gradient, we have $\nabla f_j(\z)=n(\a_j\z-b_j)\a_j\dl$.
            \end{lem}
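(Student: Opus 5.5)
The plan is a direct computation of the Wirtinger derivative $\partial_{\cl{z_k}}$ of $f_j$ for each coordinate $k$, followed by stacking the results into a column vector and multiplying by $2$, as in the definition $\nabla f\Def 2\nabla_{\cl{\z}}f$ from \Cref{sec:notation}. One point of bookkeeping comes first. The displayed $f_j$ carries the prefactor $\frac{m}{2}$, while the lemma states the factor $n$. The computation is identical for any real prefactor, so I would carry out the argument for $f_j(\z)=\frac{c}{2}|\a_j\z-b_j|^2$ with $c>0$. The conclusion is then $\nabla f_j(\z)=c(\a_j\z-b_j)\a_j\dl$, and $c$ is specialized to the normalization that makes $F=\E f_j$ (with $j\sim\unif(m)$, this is $c=m$).

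First, write $g(\z)\Def \a_j\z-b_j=\sum_{k}a_{jk}z_k-b_j$, so that $|g|^2=g\,\cl{g}$. I would record two elementary facts, both verified directly from the definition $\partial_{\cl{z}}=\frac12(\partial_x+i\partial_y)$ applied to real and imaginary parts.
\begin{enumerate}[(i)]
\item $\partial_{\cl{z_k}}$ satisfies the product rule and is $\C$-linear, since it is a complex linear combination of real partial derivatives.
\item For the affine holomorphic map $g$, we have $\partial_{\cl{z_k}}g=0$ and $\partial_{\cl{z_k}}\cl{g}=\cl{\partial_{z_k}g}=\cl{a_{jk}}$. This reduces to checking $\partial_{\cl{z}}z=0$ and $\partial_{\cl{z}}\cl{z}=1$: for $z=x+iy$, $\frac12(1+i\cdot i)=0$ and $\frac12(1+i\cdot(-i))=1$.
\end{enumerate}

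Second, apply the product rule to $g\cl{g}$:
\[
\partial_{\cl{z_k}}|g|^2=(\partial_{\cl{z_k}}g)\,\cl{g}+g\,\partial_{\cl{z_k}}\cl{g}=(\a_j\z-b_j)\,\cl{a_{jk}}.
\]
Multiplying by $\frac{c}{2}$ and then by $2$ gives the $k$-th entry $c(\a_j\z-b_j)\cl{a_{jk}}$. Collecting these over $k$ into a column vector yields exactly $c(\a_j\z-b_j)\a_j\dl$, since $\a_j\dl$ is the column with entries $\cl{a_{jk}}$. Averaging over $j\sim\unif(m)$ then gives \Cref{eq:complex_grad_least_square}, $\nabla F(\z)=A\dl(A\z-\b)$.

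I expect no genuine obstacle. The only delicate point is justifying the rule $\partial_{\cl{z}}\cl{g}=\cl{\partial_z g}$ and the vanishing of $\partial_{\cl{z}}g$ for holomorphic $g$ without invoking heavier Wirtinger calculus. I would handle both by the two-line real-coordinate check in (ii), which needs only the definitions in \Cref{sec:notation}. The remaining care is purely in tracking the prefactor of $f_j$ consistently.
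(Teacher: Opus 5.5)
Your argument is correct. It differs from the paper's only in how the coordinate derivative is computed.

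The paper stays in real coordinates. It expands $\Re(r_j)$ and $\Im(r_j)$ as real-linear functions of the $x_k,y_k$ and differentiates $\Re(r_j)^2+\Im(r_j)^2$ with respect to $x_i$ and $y_i$ separately. It then recognizes the combination $\frac12(\partial_{x_i}+i\partial_{y_i})$ as a multiple of $r_j\,\cl{a_{j,i}}$.

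You instead write $|g|^2=g\,\cl{g}$ and use the product rule for $\partial_{\cl{z_k}}$, together with $\partial_{\cl{z_k}}g=0$ and $\partial_{\cl{z_k}}\cl{g}=\cl{a_{jk}}$. The paper's route needs nothing beyond ordinary real partial derivatives, but it carries sign bookkeeping and a final recombination step. Yours removes that bookkeeping and shows directly why the answer is the residual times the conjugated row: holomorphy of $g$ kills one term of the product rule. It also applies unchanged to $|h|^2$ for any holomorphic $h$, and the two facts it relies on are each checked in one line from the definition in \Cref{sec:notation}.

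Your handling of the prefactor is also more careful than the paper's. With the displayed $f_j=\frac{m}{2}|\a_j\z-b_j|^2$ one gets $\nabla f_j(\z)=m(\a_j\z-b_j)\a_j\dl$, which is exactly what makes $\E\nabla f_j=A\dl(A\z-\b)$. The paper's proof silently turns $\frac{m}{2}$ into $n$ when differentiating. The stated factor $n$ corresponds to the prefactor $\frac{n}{2}$, and the two agree when $A$ is square, as in the kernel applications with $A=K+\lambda I$.
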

        \begin{proof}
        First we write the entries of the residual $\rvec$ as:
        \[
            r_j(\z):=\a_j \z-b_j=\sum_{k=1}^m a_{j,k}z_k-b_j.
        \]
        Then, writing $a_{j,k}=\Re(a_{j,k})+i\Im(a_{j,k})$ and letting $z_k=x_k+iy_k$ with $x_k,y_k\in \R$, we have
        \[
            \Re(r_j)=-\Re(b_j)+\sum_{k=1}^m\Big(\Re(a_{j,k})x_k-\Im(a_{j,k})y_k\Big),
        \]
        \[
            \Im(r_j)=-\Im(b_j)+\sum_{k=1}^m\Big(\Re(a_{j,k})y_k+\Im(a_{j,k})x_k\Big).
        \]
        Using $|r_j|^2=\Re(r_j)^2+\Im(r_j)^2$, we can write
        \[
            f_j(\z)=\frac{m}{2}\Big(\Re(r_j)^2+\Im(r_j)^2\Big).
        \]
        Differentiating with respect to $x_i$ and $y_i$ gives
        \begin{align}
            \frac{\partial}{\partial x_i}f_j(\z)
            &=n\Big(\Re(r_j)\Re(a_{j,i})+\Im(r_j)\Im(a_{j,i})\Big), \label{eq:app_dx}\\
            \frac{\partial}{\partial y_i}f_j(\z)
            &=n\Big(\Re(r_j)(-\Im(a_{j,i}))+\Im(r_j)\Re(a_{j,i})\Big). \label{eq:app_dy}
        \end{align}
        Therefore,
        \begin{align*}
            \frac{\partial}{\partial \bar z_i}f_j(\z)
            &=\frac12\Big(\frac{\partial}{\partial x_i}f_j(z)+i\frac{\partial}{\partial y_i}f_j(z)\Big)\\
            &=\frac{n}{2}\Big(
            \Re(r_j)\Re(a_{j,i})+\Im(r_j)\Im(a_{j,i})
            +i\big(\Re(r_j)(-\Im(a_{j,i}))+\Im(r_j)\Re(a_{j,i})\big)
            \Big)\\
            &=\frac{n}{2}\,r_j(\z)\,\overline{a_{j,i}}.
        \end{align*}
        Now we may write the (complex) gradient of $f_j$ compactly as
        \[
            \nabla f_j(\z)=2\nabla_{\cl{\z}}f_j(\z)=n r_j(\z) \a_j\dl=n(\a_j \z-b_j)\a_j\dl.
        \]
        \end{proof}
        
        Suppose instead we use a regularized least squares objective when $A$ is square, which we may express as
        \[
            F(\z)= \frac{1}{2}\|A\z-\b\|^2 + \lambda \z\dl W \z,
        \]
        where $W\in \C^{n\times n}$ is positive semi-definite. Then we may use $f_j(\z)=\frac{m}{2}|\a_j\z-b_j|^2+\lambda \z\dl W \z$ to ensure $F(\z)=\E f_j(\z)$. Then we have
        \[
            \nabla f_j(\z)=n(\a_j\z-b_j) + 2\lambda W \z.
        \]
        To see why the (complex) gradient of $\z\dl W \z$ is $2W\z$, we may use the Hermitian square root of $W$, which we will denote $\tilde W$, to write
        \[
            \z\dl W \z = \z\dl \tilde{W} \tilde{W} \z = (\tilde W \z)\dl \tilde W \z = \|\tilde W z\|^2.
        \]
        Then using \Cref{eq:complex_grad_least_square}, we have $\nabla (\z\dl W\z)= 2\tilde W\dl (\tilde W \z)=2W\z$, yielding
        \[
            \nabla F(\z)=A\dl (A\z-\b) + 2 \lambda \tilde W\dl \tilde W \z = A\dl (A\z-\b)\a_j\dl + 2 \lambda W \z.
        \]
        In the special case where $A=W$, e.g. in regularized kernel regression, we have
        \[
            \nabla f_j(\z) = n(\a_j\z -b_j)+2\lambda W\z,\quad\quad \nabla F(\z) = A(A\z-\b)+2\lambda A\z.
        \]
    
    \subsection{Confirmation of Validity of Assumptions}
    \label{sec:validity_of_assumptions}
        We will show that the collection of assumptions presented \Cref{sec:problem_setup} is valid by showing the least-squares objective
        \[
            F(\z) = \frac{1}{2}\|A\z-\b\|^2.
        \]
        satisfies them. Here we assume $A$ has full rank and is overdetermined. We note that the least-squares solution $\zstar$ is given by $(A\dl A)\inv A\dl \b$, so
        \[
            \nabla F(\zstar) = A\dl(A\z-\b)=A\dl b-A\dl b=0,
        \]
        whereby \Cref{assum:stationary_minima_complex} is satisfied. For simplicity, instead of showing \Cref{assum:bounded_var_complex}, we just note that $\E\|\nabla f_j(\zstar)\|^2<\infty$ because this is all that is required for \Cref{thm:avg_iter_converge_complex} and \Cref{thm:mu_convex_convergence_complex}. However, this fact is obvious because each $\|\nabla f_j(\zstar)\|^2<\infty$ so $\E\|\nabla f_j(\zstar)\|^2=\sum_{j=1}^m \frac{1}{m}\|\nabla f_j(\zstar)\|^2<\infty$.
        
        Firstly, we note that
        \[
            F(\z)=\frac{m}{2}\sum_{j=1}^m \frac{1}{m}| \a_j\z - b_j|^2 = \E f_j(\z),
        \]
        where $\a_j$ is the $j$-th row of $A$ and $f_j(\z)=\frac{m}{2}|\a_j\z-b_j|^2$. Then it is clear \Cref{assum:bounded_below_complex} is satisfied. Additionally, because the complex gradient is real-linear and $i\sim \mcal D$ follows a discrete distribution, we automatically satisfy \Cref{assum:unbiased_complex}.
        
        We will show that \Cref{assum:L_lip_complex} holds for $F$, and showing it holds for each $f_j$ follows almost identically. In other words, we wish to show
        \[
            \frac{1}{2}\|Aw-b\|^2\leq   \frac{1}{2}\|Az-b\|^2 + \lll\nabla F(z),w-z\rrr_\R + \frac{L}{2}\|w-z\|^2
        \]
        for some constant $L$. We note that with $L=\s_{\max}^2 = \|A\|^2$,
        \begin{align}
            &\|Az-b\|^2+2\lll A\dl (Az-b),w-z\rrr_\R + \s_{\max}^2\|w-z\|^2 \nonumber \\
            &\geq \|Az-b\|^2+2\Re(\lll Az-b, A(w-z)\rrr) +\|A(w-z)\|^2 \label{eq:conj_transpose_in_inner_prod}\\
            &= \|Az-b+A(w-z)\|^2 \label{eq:elem_lin_alg_identity}\\
            &=\|Aw-b\|^2. \nonumber
        \end{align}
        \Cref{eq:elem_lin_alg_identity} follows from the general identity
        \[
            \|z+w\|^2 = \|z\|^2+2\Re(\lll z,w\rrr) + \|w\|^2
        \]
        for any inner product $\lll \cdot , \cdot \rrr$ over $\C$ and its induced norm $\|\cdot\|$. \Cref{eq:conj_transpose_in_inner_prod} follows from the identity $\lll Az,w\rrr =\lll z,A\dl w\rrr$ for all $z\in \C^n$, $w\in \C^m$, and $A\in \C^{m\times n}$
        
        We immediately see \Cref{assum:mu_convex_complex} is satisfied by noting the inequality \Cref{eq:conj_transpose_in_inner_prod} is flipped when $\s_{\max}$ is replaced with $\s_{\min}$. \Cref{assum:convex_complex} follows similarly, except for our choice of $f_j$, we have $\s_{\min}=0$, so we cannot say each $f_j$ is $\mu$-strongly convex. Thankfully, this is not required.

\bibliographystyle{IEEEtran}
\bibliography{bibliography}
\end{document}

%% file: dependencies/dependencies.tex
\usepackage{lipsum}
\usepackage{amsfonts}
\usepackage{graphicx}
\usepackage{epstopdf}
\usepackage[lmargin=1in,rmargin=1in,tmargin=1in,bmargin=1in]{geometry}
\usepackage[section]{placeins}
\usepackage[font=small]{caption}
\usepackage[T1]{fontenc}
\usepackage{bbm,dsfont}
\usepackage{bm}
\usepackage{xcolor}

\usepackage{tcolorbox}
\usepackage{subcaption}
\usepackage{float}

\usepackage{
    amsmath,           
    mathtools,
    amssymb,           
    enumerate,		   
    graphicx,          
    lastpage,          
    multicol,          
    multirow,          
    diagbox,
    mathrsfs,
    thmtools,
    thm-restate,
    scrextend,
    algorithm,
    algpseudocode,
    subcaption,
    fancyhdr,
    bbm,
    tikz,
    pgfplots,
    pgfplotstable,
    textcomp,
    charter,
    ulem,
    changepage,
    stmaryrd,
    hyperref,
    xurl,
    etoolbox,
    cleveref
}
\usepackage[export]{adjustbox}
\usepackage{amsopn}
\usepgfplotslibrary{groupplots}

\newtheorem{theorem}{\textit{Theorem}}

\newtheorem*{definition}{\textit{Definition}}
\newtheorem*{lemma}{\textit{Lemma}}
\newtheorem{rem}{Remark}

\newtheorem{assum}{Assumption}

\newtheorem{proposition}[theorem]{Proposition}

\newenvironment{remark}{\begin{adjustwidth}{3em}{3em}\begin{rem}}{\\\end{rem}\end{adjustwidth}}

\newtheorem{lem}{\textit{Lemma}}

\newtheorem{cor}{\textit{Corollary}}

\newcommand{\groupp}[1]{\l(#1\r)}
\newcommand{\groupb}[1]{\l[#1\r]}

\newcommand{\groupabs}[1]{\l|#1\r|}

\newcommand{\mcal}[1]{\mathcal{#1}}
\renewcommand{\l}{\left}
\renewcommand{\r}{\right}

\renewcommand{\a}{\bm{a}}
\renewcommand{\b}{\bm{b}}
\newcommand{\E}{\mathbb{E}}
\newcommand{\N}{\mathbb{N}}

\newcommand{\R}{\mathbb{R}}
\newcommand{\rvec}{\bm{r}}
\newcommand{\C}{\mathbb{C}}
\newcommand{\x}{{\bm{x}}}
\newcommand{\y}{{\bm{y}}}
\newcommand{\z}{{\bm{z}}}
\newcommand{\zstar}{{\bm{z}_\ast}}
\newcommand{\w}{{\bm{w}}}

\newcommand{\s}{\sigma}

\newcommand{\ve}{\varepsilon}

\newcommand{\cl}[1]{\overline{#1}}
\newcommand{\inv}{^{-1}}
\newcommand{\dl}{^*}

\newcommand{\ra}{\rightarrow}
\renewcommand{\lll}{\langle}
\newcommand{\rrr}{\rangle}

\newcommand{\del}{\partial}

\renewcommand{\leq}{\leqslant}
\renewcommand{\geq}{\geqslant}
\newcommand{\Def}{\vcentcolon =}
\renewcommand{\Re}{\operatorname{Re}}
\renewcommand{\Im}{\operatorname{Im}}

\DeclareMathOperator*{\argmin}{arg\,min}

\DeclareMathOperator{\unif}{Unif}

\newcommand{\balpha}{{\bm{\alpha}}}
\newcommand{\bbeta}{{\bm{\beta}}}

\newcommand{\by}{{\bm{y}}}
\newcommand{\bw}{{\bm{w}}}